%% file: root.tex
\documentclass[letterpaper, 10pt, journal, twoside]{IEEEtran}
\IEEEoverridecommandlockouts  % Needed for \thanks command
\usepackage[bottom]{footmisc}% places footnotes at page bottom

\usepackage{siunitx}
\usepackage{amsmath}
\usepackage{amsthm}
\usepackage{amssymb}
\usepackage{mathtools}
\usepackage{mathrsfs}

\usepackage{algorithm}
\usepackage[noend]{algpseudocode}

\usepackage{multirow}
\usepackage{enumerate}
\usepackage{tabularx}
\usepackage[dvipsnames]{xcolor}
\usepackage[noadjust]{cite}
\usepackage{indentfirst}
\usepackage{graphicx} 
\graphicspath{{./fig/}}
\usepackage[font=small]{caption}
\usepackage{subcaption}
\usepackage{booktabs}
\usepackage{comment}
\usepackage[breaklinks=true, colorlinks, bookmarks=true]{hyperref}

\usepackage{tikz}
\usepackage{pgfplots}
\pgfplotsset{compat=1.9}

\newtheorem{theorem}{Theorem}[section]

\newtheorem{lemma}[theorem]{Lemma}

\newtheorem{corollary}[theorem]{Corollary}
\newtheorem{proposition}[theorem]{Proposition}
\newtheorem{problem}{Problem}
\newtheorem{assumption}{Assumption}

\newcommand{\oprocendsymbol}{\hbox{$\bullet$}}
\newcommand{\oprocend}{\relax\ifmmode\else\unskip\hfill\fi\oprocendsymbol}

\newcommand{\stkout}[1]{\ifmmode\text{\sout{\ensuremath{#1}}}\else\sout{#1}\fi}

\definecolor{paleGreen}{rgb}{.3, .7, .3}

\newcommand{\interior}[1]{\operatorname{int}(#1)}

\newcommand{\unom}{\bfu_{\mathrm{nom}}}

\newcommand{\Cc}{\mathcal{C}}

\newcommand{\Uc}{\mathcal{U}}

\newcommand{\Ic}{\mathcal{I}}

\renewcommand{\theenumi}{(\roman{enumi})}

\input{cls/sym}

\allowdisplaybreaks

\begin{document}
\title{\LARGE \bf \textsc{BubbleCBF}: Constructive Safety for Second-Order Systems and Application to Autonomous Robot Navigation in Unknown Environments}
\author{%
Mani Amani\textsuperscript{1,*},
Mohammed Alyaseen\textsuperscript{2,*},
Zhirui Dai\textsuperscript{1},
Ki Myung Brian Lee\textsuperscript{1},
Nikolay Atanasov\textsuperscript{1},
and Jorge Cortes\textsuperscript{2}
\thanks{Mani Amani and Mohammed Alyaseen contributed equally.}
\thanks{\textsuperscript{1}Department of Electrical and Computer Engineering;
\textsuperscript{2}Department of Mechanical and Aerospace Engineering,
University of California San Diego, La Jolla, CA, USA.}
\thanks{We gratefully acknowledge support from ONR Award N00014-23-1-2353 and ARL DCIST CRA W911NF-17-2-0181.}
}
\maketitle
\thispagestyle{empty}  % remove page number
    \pagestyle{empty}      % remove page number
\input{tex2/abstractv2}

\input{tex2/introv2}

\input{tex2/rel_work}

\input{tex2/prob_formv2}

\input{tex2/solveProblemForm}
\input{tex2/planning}
\input{tex2/results}

\input{tex2/conclusion}

\bibliographystyle{IEEEtran}
\bibliography{refs}
% \bibliographystyle{unsrt}
% \newpage

\input{appendix}
%\appendices
\end{document}

%% file: cls/sym.tex
\newcommand{\bfc}{\mathbf{c}}

\newcommand{\bfe}{\mathbf{e}}
\newcommand{\bff}{\mathbf{f}}

\newcommand{\bfq}{\mathbf{q}}

\newcommand{\bfu}{\mathbf{u}}
\newcommand{\bfv}{\mathbf{v}}

\newcommand{\bfx}{\mathbf{x}}
\newcommand{\bfy}{\mathbf{y}}
\newcommand{\bfz}{\mathbf{z}}

\newcommand{\calB}{\mathcal{B}}
\newcommand{\calC}{\mathcal{C}}

\newcommand{\calH}{\mathcal{H}}

\newcommand{\calO}{\mathcal{O}}

\newcommand{\calQ}{\mathcal{Q}}

\newcommand{\calS}{\mathcal{S}}

\newcommand{\calV}{\mathcal{V}}

\newcommand{\calX}{\mathcal{X}}

%% file: tex2/abstractv2.tex
\begin{abstract}
Satisfying safety constraints for systems with second-order dynamics is challenging because a position constraint may require infeasible deceleration. Higher-order control barrier functions (CBFs) provide an approach for expressing safety conditions for such systems but feasibility depends on the system dynamics and available control authority. We develop a constructive approach for designing feasible safety constraints for second-order control-affine systems. Starting from a signed distance function (SDF), we construct collision-free spheres, i.e., \emph{bubbles}, in position space, and use their quadratic geometry and overlap to construct a velocity-dependent safety constraint. The associated safe set is represented as a union of intersections and enforced through a quadratic-program safety filter. Under mild assumptions on the dynamics, bubble overlap, and design parameters, we prove that the quadratic-program-based controller is well defined, continuous in the system state, and renders the set forward invariant. We also derive sufficient conditions for feasibility under bounded inputs. We demonstrate that our \textsc{BubbleCBF}, applied to a differential-drive robot, achieves safe autonomous navigation in unknown environments. Videos and results are available at:
\url{https://existentialrobotics.org/bubblecbf/}. 
\end{abstract}

%% file: tex2/introv2.tex
\section{Introduction}
Autonomous navigation in unknown environments requires a robot to simultaneously build a map of its surroundings and execute actions that avoid collisions. 
% For robots with second-order dynamics, a central challenge is not only to represent collision-free space but also to ensure the existence of feasible actions that enforce safety constraints. 
Beyond representing obstacle-free space, robots with second-order dynamics must ensure that feasible actions exist to enforce safety constraints. Since actuation enters at the acceleration level, a robot may remain positionally safe while approaching the boundary too quickly for any admissible input to prevent escape.

Control barrier functions (CBFs) \cite{safetycriticalAmes} provide a principled approach for safe control design. A CBF limits how quickly the safe set boundary may be approached with rate dependent on the barrier value.
%Rather than requiring a robot to remain inside a safe set, a CBF limits how quickly the safe set boundary may be approached. This allowable rate depends on the barrier value. 
Far from the boundary, the barrier value may decrease more quickly, while near the boundary, the permitted decrease becomes increasingly restrictive. Signed distance functions (SDFs) are natural CBF candidates because they encode free space using the distance to the nearest boundary \cite{osher2004level}. SDF models of the environment can be estimated from point-cloud observations using data-driven and probabilistic approaches, including neural implicit methods \cite{isdf} and Gaussian-process methods \cite{kernelSDF}. However, for second-order systems, a position-dependent safety constraint has relative degree two and cannot be enforced by formulating CBF constraints directly from SDF values. Higher-order CBFs (HOCBFs) \cite{highorderCBF} formulate safety constraints for high-relative-degree systems but do not by themselves guarantee the existence of feasible control inputs. A key challenge in safe control synthesis for second-order systems is therefore to construct valid safe sets in the joint position-velocity space whose invariance can be enforced with admissible inputs.

% are a natural representation of collision geometry because they 
%These techniques enable SDF estimation even when the environment is only partially observed or measurements are uncertain.
% \NA{What is this prescribed rate? Explain that it is related to the distance from the boundary of the safe set.}

%safe set on which the safety-filtering problem is guaranteed to remain feasible.

\begin{figure}
\centering
\includegraphics[width=\linewidth,trim={0 1cm 0 0.85cm},clip]{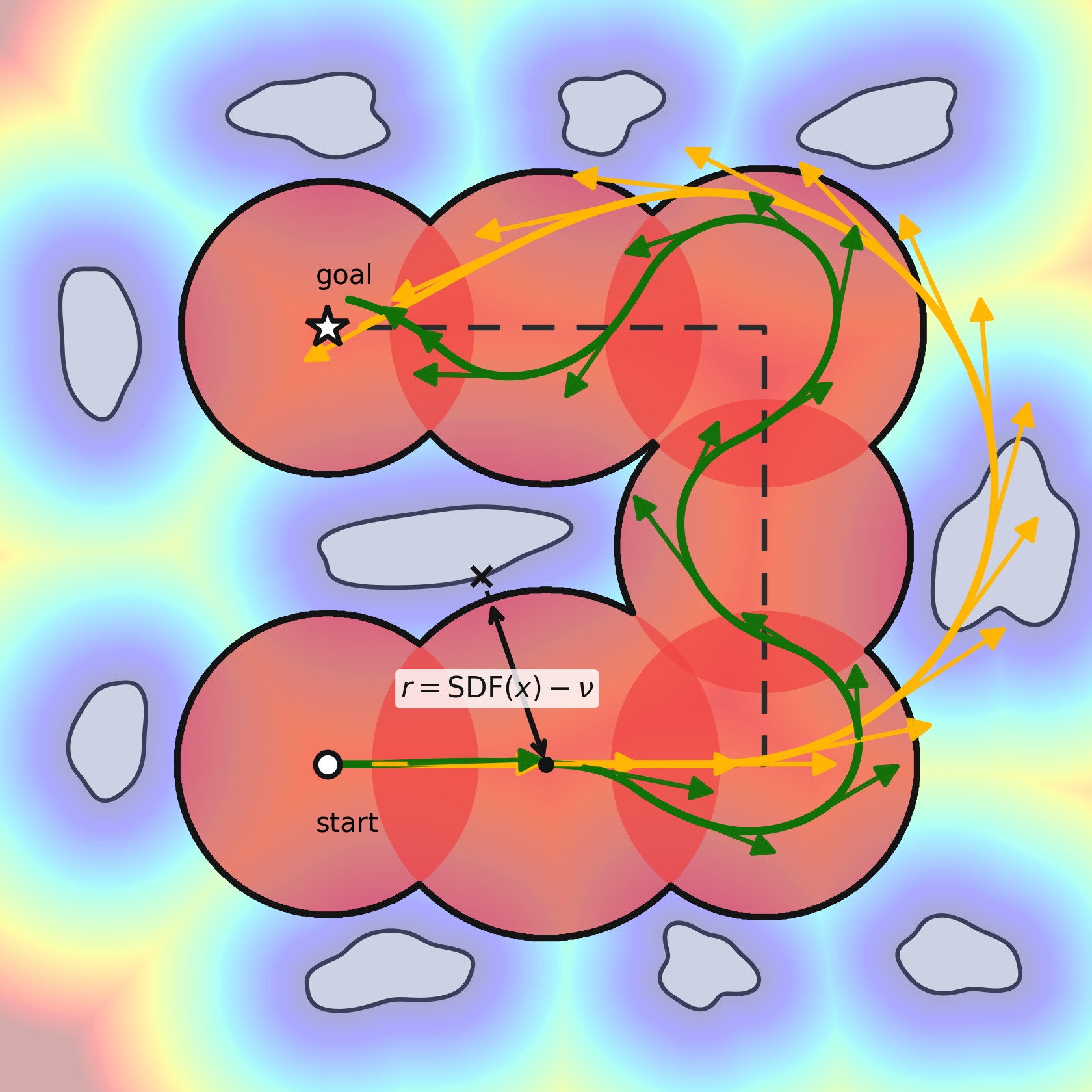}
\caption{\textsc{BubbleCBF} guides a second-order system from start to goal through a safe corridor consisting of overlapping bubbles. Each bubble is a ball with radius underestimating the distance to the nearest obstacle. The union of bubbles is lifted to a velocity-dependent safe set to account for the relative-degree-two dynamics. A quadratic program with a control barrier function constraint defined from the lifted safe set is used to modify a nominal controller to restrict the system's outward motion near bubble boundaries and ensure sufficient control authority for safety. The nominal system trajectory (orange) and the \textsc{BubbleCBF}-filtered trajectory (green) are shown.}
\label{fig:mainfig}
\end{figure}

%  in position space

%\NA{In the caption of Fig.~\ref{fig:mainfig}, we say "obstacles induce a collection of overlapping safe bubbles". How? Why? This is too mysterious. The goal of the figure is to highlight our contribution. Currently this is not clear. What is new in this paper? Why are second-order systems important?},
%geometric bubbles. Each bubble is a ball inscribed in free space and can be obtained from an SDF query.
%\cite{lee2024safe}
% \NA{I suggest not including a citation right in the sentence where we introduce our method. This can bring questions about novelty.}
% \NA{We should talk about SDF and SDF estimation methods more earlier so that the reference here is clear. Currently, we have only 1 sentence. We can add 1-2 more to explain that it is a widely used representation in robotics and that neural network and GP methods have been developed for its estimation from point-cloud date.}.
% \NA{What is this prescribed rate? Explain that it is related to the distance from the boundary of the safe set.}

To address this challenge, we develop \textsc{BubbleCBF}, a safe control method for second-order control-affine systems. Starting from an SDF representation of the obstacles, we derive a velocity-dependent safe set from overlapping collision-free spheres, i.e., \emph{bubbles}, illustrated in Fig.~\ref{fig:mainfig}. Our design extends the union-of-intersections CBF approach of \cite{MA-NA-JC:24-auto} to quadratic constraints and guarantees forward-invariance of the safe set and feasibility under input constraints. Our contributions are summarized as follows.

\begin{itemize}
    \item We develop \textsc{BubbleCBF}, a safe control method for second-order control-affine systems that constructs velocity-dependent CBFs from position-space bubbles.
    
    %present \textsc{BubbleCBF}, an optimization-based safety-filtering approach for second-order control-affine systems. Using only SDF values, \textsc{BubbleCBF} constructs a union of collision-free bubbles and exploits their quadratic geometry to define a velocity-dependent safe set without requiring SDF gradients or Hessians.

    \item Under mild assumptions on the system dynamics, bubble overlap, and input constraints, we prove that the quadratic program (QP) for \textsc{BubbleCBF} is feasible over the safe set, and renders the set forward invariant. 
    %We derive sufficient conditions for feasibility under input constraints.

    \item \textsc{BubbleCBF} enables safe autonomous navigation with a Clearpath Jackal robot in real-world environments, achieving a 100\% success rate, compared to 50\% for an SDF-CBF baseline, while increasing average obstacle clearance by 4.8\%. 
\end{itemize}

%% file: tex2/rel_work.tex
\section{Related Work} 
CBFs have become a popular tool for safety-critical robot control~\cite{safetycriticalAmes,KehanDR,CBFdeadlock}. For collision avoidance, SDFs~\cite{osher2004level} are a natural candidate to form safety constraints because they measure the minimum distance to occupied space. However, using an SDF to define a CBF constraint directly leads to challenges because SDF are often nonsmooth and can result in high relative degree constraints for high-order systems.

% Existing work addresses these challenges in several ways. In~\cite{thirugnanam2022safety}, smooth safety constraints are derived from nonsmooth SDFs via duality. Other approaches approximate SDFs with smooth functions~\cite{wu2025optimization,long2021learning}, while \cite{glotfelter2017nonsmooth} builds on nonsmooth CBF theory to handle nonsmooth constraints directly. High relative degree is typically addressed by constructing higher-order CBFs \cite{highorderCBF,nguyen2016exponential,MA-NA-JC:25-csl}.
% % \NA{We should provide more references here. There is a lot of work on high-order CBFs}. 
% %Methods that handle nonsmoothness directly generally consider simple first-order models, avoiding the relative-degree issue.
% Barbosa and Tumova \cite{barbosa2022risk} address both challenges associated with SDF safety constraints by first approximating the SDF with a smooth function and then applying higher-order CBF techniques.
Existing work addresses these challenges in several ways. In~\cite{thirugnanam2022safety}, dual variables are used to formulate CBF constraints for nonsmooth distance functions between polytopes. Other approaches construct smooth approximations of distance functions~\cite{wu2025optimization}, while nonsmooth CBF theory handles nonsmooth barrier compositions directly~\cite{glotfelter2017nonsmooth}. High relative degree is addressed using exponential and higher-order CBF constructions~\cite{nguyen2016exponential,highorderCBF}. Long et al.~\cite{long2021learning} learn SDF approximations online and use them to construct robust CBF constraints for safe navigation in unknown environments. Barbosa and Tumova~\cite{barbosa2022risk} address both nonsmoothness and high relative degree by constructing a smooth approximation of the distance field and its derivatives and applying higher-order CBF techniques.
% \NA{Is this the only work that does safe robot navigation with SDF constraints?}
In contrast, we neither smooth nor differentiate the SDF. We use only SDF queries to construct collision-free balls \cite{lee2024safe}, whose quadratic geometry allows us to construct a velocity-dependent safe subset of their union for second-order systems.
% \NA{Rewrite the second part of this sentence. How is our work different from \cite{barbosa2022risk}?}.

Regularity is especially challenging for nonsmooth safe sets composed of multiple barrier functions~\cite{PM-AA-JC:25-ejc,MA-NA-JC:25-tac,morris2015continuity}. Alyaseen et al.~\cite{MA-NA-JC:24-auto} address this by enforcing componentwise CBF constraints with state-dependent relaxation, avoiding explicit switching between active components and yielding a continuous optimization-based controller. We extend this approach to a union of bubble intersections and derive the corresponding quadratic barriers and feasibility conditions for second-order dynamics and bounded inputs. 
% After designing a valid CBF comes the nontrivial task of using the designed CBF to construct a safe controller with suitable regularity properties\NA{This sentence is not saying anything useful.}. Optimization-based feedback controllers, especially quadratic programming-based controllers~\cite{SCH-XX-ADA:15} have gained popularity in this regard due to their ease of use, flexibility to encode other control objectives, and relatively low computational demand. Those controllers are solutions of optimization problems parameterized by the state and constrained by the safety requirements\NA{Isn't this what CBF methods that we reviewed in the previous paragraph essentially are? This paragraph makes it sound like we are talking about something else.}. Regularity properties, such as continuity, Lipschitzness, and boundedness, of this class of controllers are studied in \cite{PM-AA-JC:25-ejc,MA-NA-JC:25-tac,morris2015continuity}. Ensuring regularity properties is especially challenging when using nonsmooth or multiple barrier functions because of the abrupt changes in safety requirements with the state. This difficulty is circumvented in \cite{MA-NA-JC:24-auto}, which constructs a continuous controller for a safe set given as a union of different set components\NA{This is vague. What are these different set components? How is the difficulty circumvented?}. We deploy the design in~\cite{MA-NA-JC:24-auto} in our work\NA{How do we employ it? What is different? What is new?}. %\BL{Mo: please add some control design literature.}

Bubble-based representations have also been used in robotics to cover collision-free configuration space with overlapping local regions~\cite{BubblePlanner,lee2024safe}. Sampling-based neighborhood graphs (SNG)~\cite{yang2004sng} cover free configuration space with overlapping neighborhoods, such as balls, and construct feedback navigation functions over the resulting graph. Invariant configuration-space bubbles~\cite{danielson2023invariant} lift configuration-space bubbles to control-invariant subsets of the state space and enforce invariance using a command governor. In contrast, we construct a CBF over a union of overlapping bubbles and enforce it using a continuous QP safety filter for general second-order systems.
Moreover, we show that this QP is always feasible within the safe set, and hence guarantees forward invariance of the safe set. Geometry-aware mixed-integer linear programming model predictive control (MILP-MPC)~\cite{chen2026long} combines long-horizon planning with Minkowski-based CBFs, accounting explicitly for robot geometry and environment topology at the cost of mixed-integer optimization. Our approach instead operates on an SDF-derived bubble cover, enabling online safety filtering in unknown environments without mixed-integer planning or exact robot-obstacle distance computations.

%% file: tex2/prob_formv2.tex
\section{Problem Formulation}
\label{sec:problem_formulation}

Consider a second-order control-affine system:
\begin{equation}
\label{eq:second_order_dynamics}
\begin{aligned}
    \dot{\bfq} &= \bfv,\\
    \dot{\bfv} &= \bff(\bfq, \bfv)+G(\bfq,\bfv)\bfu,
\end{aligned}
\end{equation}
where $\bfq\in\mathcal{Q}\subseteq\mathbb{R}^n$ is the generalized position, $\bfv\in\mathcal{V}\subseteq\mathbb{R}^n$ is the generalized velocity, and $\bfu\in\mathcal {U}\subseteq\mathbb{R}^m$ is the control input. We denote the system state by $\mathbf x \coloneqq \begin{bmatrix}\bfq^\top & \bfv^\top\end{bmatrix}^{\top} \in \mathcal{X} \coloneqq \mathcal{Q}\times\mathcal{V}$. The functions $\mathbf{f}:\mathcal{X}\to\mathbb{R}^n$ and $G:\mathcal{X}\to\mathbb{R}^{n\times m}$, defining the system dynamics, are assumed to be locally Lipschitz. The system has a maximum speed constraint $\|\mathbf{v}\|\leq v_{\max}$. The second-order dynamics in~\eqref{eq:second_order_dynamics} can be written in control-affine state-space form as
\begin{equation}
\label{eq:bars}
\dot{\bfx}
=
\bar{\bff}(\bfx)+\bar G(\bfx)\bfu,
\quad
\bar{\bff}(\bfx)
\coloneq
\begin{bmatrix}
\bfv\\[-1pt]
\bff(\bfx)
\end{bmatrix},
\quad
\bar G(\bfx)
\coloneq
\begin{bmatrix}
0\\[-1pt]
G(\bfx)
\end{bmatrix}.
\end{equation}
Let $i \in \mathcal{I}\coloneqq\{1,\ldots,N\}$ index a finite collection of
closed balls $\mathcal{B}_i$, specifying safety constraints for the system's position:
\begin{equation}
\label{eq:positional_safe_ball}
    \mathcal{B}_i
    \coloneqq
    \left\{
        \mathbf q\in\mathcal Q
        \;\middle|\;
        b_i(\mathbf q)\geq 0
    \right\},
    \quad
    b_i(\mathbf q)
    \coloneqq
    r_i^2-\lVert\mathbf q-\mathbf c_i\rVert^2,
\end{equation}
where $\mathbf c_i\in\mathcal Q$ and $r_i>0$ are the center and radius of
the $i$-th ball. We refer to each $\mathcal{B}_i$ as a \emph{bubble}.

In practice, bubbles can be obtained by querying an SDF representation of the free space in $\calQ$. Fig.~\ref{fig:mainfig} illustrates this construction. The background depicts the SDF, which is queried at the bubble center $\bfc_i$ to obtain the bubble radius $r_i$. 
%The resulting SDF value provides the radius of collision-free bubble.
Thus, the safety constraint is:
\begin{equation} \label{eq:positional_safety_requirement}
    \mathbf{q}(t)\in\mathcal{B} \coloneqq \bigcup_{i\in\mathcal I}\mathcal B_i
    \qquad \text{for all }t\geq 0.
\end{equation}
% \NA{Give some motivation about why we define the safety constraints this way by connecting to SDF in robotics before discussing the relative degree part.}
% Because each $B_i$ depends only on $\mathbf{q}$, it has relative degree two
% with respect to \eqref{eq:second_order_dynamics}. 
% \NA{Give a reminder and a reference what relative degree is and means.}. 
% Each safety constraint $b_i(\bfq) \geq 0$ depends only on $\bfq$ and, hence, has relative degree two with respect to \eqref{eq:second_order_dynamics}.
%each scalar constraint \(B_i:\mathcal{Q}\to\mathbb{R}\) depends only on \(\mathbf{q}\), it has relative degree two with respect to \eqref{eq:second_order_dynamics}. 
This requirement is on the system position only, whereas control applies to the acceleration. Therefore, an outward velocity may push the system position outside $\mathcal{B}$ before it can be reversed.
% The control input $\bfu$ first appears in $\ddot b_i$, so each bubble constraint has relative degree two \cite{safetycriticalAmes}. 
Thus, it is generally not possible to certify safety for all of $\mathcal{B}\times\mathcal{V}$. We therefore seek a velocity-dependent safe set $\mathcal{S}\subseteq\mathcal{B}\times\mathcal{V}$.

% \MA{clarifying that the set needs to be nonempty/nontrivial}
%Let $\unom:\mathcal{X}\to\mathcal{U}$ be a nominal controller that is not assumed to satisfy \eqref{eq:positional_safety_requirement}.
%\begin{definition}[Safety filtering]
% \label{def:safety_filter}
% Let $\unom:\mathcal{X}\to\mathcal{U}$ be a continuous nominal controller, designed for performance (e.g., goal tracking) and not assumed to satisfy \eqref{eq:positional_safety_requirement}. A \emph{safety filter} is a feedback controller that, at each state, modifies $\unom$ as little as possible while guaranteeing that the closed-loop trajectory satisfies the safety requirement; in this work it takes the form of a state-dependent optimization whose solution is $\bfu^*(\bfx)$.
% \end{definition}

%
% \marginJC{Shouldn't we drop the idea for the reader that we want $\mathcal{S}$ to be large or nontrivial? E.g., $\mathcal{S} = \mathcal{B} \times \{0\}$ will work for our problem statement below, but we hardly would like that.}
%

\begin{problem}[Safe control over finite union of bubbles]
\label{prob:bubble_safety_filtering}
Given the system in \eqref{eq:second_order_dynamics}, and a continuous nominal controller $\unom:\mathcal{X}\to\mathcal{U}$,
%for the nominal task (e.g., goal tracking), and not assumed to satisfy \eqref{eq:positional_safety_requirement}, 
construct a nonempty set $\mathcal{S}\subseteq\mathcal{B}\times\mathcal{V}$ and a feedback controller $\bfu^*:\mathcal {S}\to\mathcal{U}$, such that:
\begin{enumerate}
    \item[(i)] $\mathcal{S}$ is forward invariant for the closed-loop system under $\bfu=\bfu^*(\bfx)$, i.e., every trajectory initialized in $\calS$ stays in $\calS$ for all $t\geq 0$; 
    %\NA{This seems to be implied by (i). Is this needed?}
    
    %\item[(ii)] the safety-filtering optimization defining $\mathbf u^*(\mathbf x)$ is feasible for every $\mathbf x\in\mathcal S$; \NA{Rewrite this. We have not introduced what "safety-filtering optimization" is and it is better not to. Either remove this entirely or you can write something like "The feedback controller $\mathbf{u}^*$ is well-defined for every $\mathbf x\in\mathcal{S}$."} and
    
    \item[(ii)] $\bfu^*$ minimally deviates from the nominal controller $\unom$.
    %among all safe admissible inputs\NA{The thing after is not an input but a controller.}, $\mathbf{u}^*(\mathbf x)$ minimally modifies $\unom(\mathbf x)$.
\end{enumerate}
\end{problem}

We refer to such a controller as a \emph{safety filter}.
 % \NA{Where is Fig.~\ref{fig:overall_piepline} referenced? Change its caption to avoid meaningless words like "overall", "system", "pipeline" and make it more descriptive about what is going on.}

%% file: tex2/solveProblemForm.tex
% develop \textsc{BubbleCBF} to 

\section{BubbleCBF Safety Filter}
We solve Problem~\ref{prob:bubble_safety_filtering} by constructing a velocity-dependent safe set $\calS$ as a union of sets $\calS_i \subset \calX$, each corresponding to a bubble $\calB_i$. We design the controller $\bfu^*$ as the solution of a QP subject to a CBF constraint encoding $\calS$. We impose additional constraints on the input $\bfu$ to ensure that $\calS$ is forward invariant and the QP is feasible.

%e then construct affine constraints on $\bfu$ that render $\calS$ forward invariant and design the controller $\bfu^*$

%so that the safety filter in Problem~\ref{prob:bubble_safety_filtering} can be realized by solving a QP.
%The QP is shown to be feasible even with input constraints, which formally guarantees the forward invariance of $\calS$.
%\NA{What is the difference between feasibility and compatibility with input bounds? Feasibility is probably enough.}. 

\subsection{Control Design}

To enforce the positional safety requirement in \eqref{eq:positional_safety_requirement}, we first consider the forward invariance of a single bubble.
%and extend this construction to the union of multiple bubbles.
A continuously differentiable function \(B: \mathcal{X} \rightarrow \mathbb{R}\) is called a \emph{CBF} \cite{safetycriticalAmes} for the system~\eqref{eq:bars} if there exists an extended class-$\mathcal{K}$ function $\Gamma$ and an admissible control input at every state satisfying $\dot{B} + \Gamma(B) \geq 0$ along the trajectory. Such control inputs render the super-level set $\{\bfx \in \mathcal{X}  \mid B(\bfx) \geq 0 \}$ forward invariant.
% at every state in that domain, an admissible input satisfies
% \begin{equation}
% \label{eq:cbf_general}
% \mathcal L_{\bar{\bff}}B(\bfx)
% +\mathcal L_{\bar G}B(\bfx)^\top\bfu
% +\gamma(B(\bfx))\geq0.
% \end{equation}
Choosing $\Gamma(B) = \gamma B$ for a scalar $\gamma > 0$, the CBF condition for the bubbles in~\eqref{eq:positional_safe_ball} is:
\begin{equation}
\label{eq:firstorderCBF}
\dot b_i+\gamma b_i=-2(\bfq-\bfc_i)^\top\bfv+\gamma b_i(\bfq)
\geq0.
\end{equation}
Here, the input $\bfu$ does not appear in
\eqref{eq:firstorderCBF}, since it acts through acceleration.
To obtain a condition depending on $\bfu$, a HOCBF~\cite{highorderCBF} treats the LHS of~\eqref{eq:firstorderCBF} as another CBF:
\begin{equation}
    h_i(\bfq,\bfv)
    \coloneqq \dot b_i+\gamma b_i-\epsilon_i,
\end{equation}
where $\epsilon_i>0$ provides a strict safety margin.
Imposing a CBF condition on $h_i$ as $\dot h_i+\alpha h_i\geq0$ for $\alpha > 0$ leads to $h_i \geq 0$ and, in turn, $b_i \geq 0$ over the trajectories.
As the dynamics~\eqref{eq:second_order_dynamics} is affine in $\bfu$, the inequality $\dot h_i+\alpha h_i\geq0$ is also affine in $\bfu$.
% Differentiating $h_i$ introduces $\ddot b_i$ and hence the
% acceleration input, yielding an inequality affine in $\bfu$.
The resulting safe set in the state space is:
\begin{equation}
\label{eq:cut_set}
    \calH_i
    =
    \left\{
    (\bfq,\bfv)\in\calX
    \mid h_i(\bfq,\bfv)\geq0
    \right\}.
\end{equation}
To account for the maximum speed constraint of the system, we define another CBF candidate:
\begin{equation}\label{eq:velconstraint_set}
    \calC = \{\bfv \in \calV \mid c(\bfv) \coloneq v_{\max}^2 - \|\bfv\|^2 \geq 0\}.
\end{equation}

Now that we have designed an HOCBF for a single bubble, we aim to render the union
$\calB:=\bigcup_{i\in\mathcal I}\calB_i$ forward invariant.
% \BL{Perhaps move the $v_{\max}$ to prob form}:
% \NA{Why not use something more meaningful than $C_{\max}$ like $v_{\max}$ below?}
% \begin{equation}\label{eq:velconstraint_set}
%     \calC = \{\bfv \in \calV \mid c(\bfv) \coloneq v_{\max}^2 - \|\bfv\|^2 \geq 0\}.
% \end{equation}
% Given these sets, our safe set candidate over the state space $\calX$ can be defined as a union of intersections of the first- and second-order\NA{I am not sure what a "second-order set" is. It may be best to avoid this.} sets above:
%Given the sets induced by the CBF constraints above, 
We define the safe set over \(\calX\) as a union of intersections:

\begin{equation}\label{eq:safe_set}
% \begin{aligned}
    \calS \coloneq \bigcup_i \calS_i, \quad 
    \calS_i \coloneq (\calB_i \times \calC) \cap \calH_i.
% \end{aligned}
\end{equation}
These intersections $\calS_i$ and their union $\calS$ can be represented implicitly as super-level sets by applying $\min$ and $\max$ operations as follows:
% $\calS$ and the intersection 
% These union and intersection sets can be encoded implicitly as the super-level sets of corresponding CBF candidates using $\min$ and $\max$ functions in place 
% %In terms of CBF candidates,\NA{Explain what this means, i.e., that these sets can be encoded implicitly as the zero-level sets of these functions, which can then be used as CBF constraints in the control design stage.}, 
% the intersection and union operations can be encoded using $\min$ and $\max$ functions as follows:
% Operationally, safe set candidate is represented as a super-level set of the following function $S$.
\begin{equation}
\begin{aligned}
    s_i(\bfx) &= \min\left\{ b_i(\bfq), c(\bfv), h_i(\bfx)\right\}, \\
    s(\bfx) &   = \max_{i} s_i(\bfx),
\end{aligned}
\end{equation}
so that $\calS_i$ is the zero superlevel set of $s_i(\bfx)$ and $\calS$ is the zero superlevel set of $s(\bfx)$.
% \NA{The notation of very similar letters for sets and functions makes it hard to read in my opinion. Perhaps, we can use $s_i$ instead of $S_i$?}, and $\calS$ is the super-level set of $s$. 
% The components $B_i(\bfq)$, $C(\bfv)$, and $H_i(\bfx)$ are defined in~\eqref{eq:positional_safe_ball},~\eqref{eq:cut_set}, and~\eqref{eq:velconstraint_set}\NA{This sentence is not needed.}.
However, these functions are not suitable for use as CBFs, because they are nonsmooth: the minimal constraint in \(s_i\) and the maximal index in \(s\) may change with the state. 
Such non-smoothness violates the theoretical premise of CBF constraints~\cite[Theorem 2]{safetycriticalAmes}, and can cause the system to leave $\calS$~\cite[Example 4.4]{MA-NA-JC:24-auto}.
Instead of differentiating \(s\), we instead impose CBF conditions on \(b_i\), \(c\), and \(h_i\) for every index \(i\), while relaxing those associated with non-maximal indices.
%\NA{Should we give this function a name and refer to it after?}

To see which bubble index $i$ contributes to the maximum, define the nonnegative gap:
\begin{equation}
    \Delta_i(\bfx)= s(\bfx)-s_i(\bfx).
\end{equation}
%
% \marginJC{All this explanation about constructing a relaxation for the inactive constraints, we authors know why, but we have not explained the reader why we're doing it. I'd say something along the lines that we don't want to only consider the active constraints, b/c this set changes with the state, and that can induce discontinuities in the controller as a function of the state. Our solution is to consider all constraints (that's the all-components bit from the abstract), but doing it in a way that does not overconstrain the control action, i.e., making sure that satisfying the non-active constraints is easy at any given state.}
%
We enforce safety using the following QP, where the CBF constraints are imposed for every bubble \(i\):
\begin{equation}
\label{eq:qp_cbf}
\begin{aligned}
\min_{\alpha,M,\bfu}\quad
& \|\bfu-\unom(\bfx)\|_{P}^{2}
  + q_{\alpha}\alpha^{2}
  + q_M M^{2} \\
\text{s.t.}\quad
& M \ge \mu_M,
  \qquad
  \alpha \ge \mu_\alpha, \\
\forall i, \quad 
& \mathcal \nabla b_i(\bfq)^\top \bfv
  + \gamma b_i(\bfq)
  + M\Delta_i(\bfx)
  \ge 0, \\
\forall i, \quad 
& \mathcal L_{\bar{\bff}} h_i(\bfx)
  + \mathcal L_{\bar G} h_i(\bfx)^\top \bfu
  + \alpha h_i(\bfx)
  + M\Delta_i(\bfx)
  \ge 0, \\
\forall i, \quad 
& \mathcal L_{\bff} c(\bfv)
  + \mathcal L_{G} c(\bfv)^\top \bfu
  + \alpha c(\bfv)
  + M\Delta_i(\bfx)
  \ge 0.
\end{aligned}
\end{equation}
Here, $\|z\|_P^2\coloneqq\bfz^\top P\bfz$, with $P\succ0$, penalizes deviation from the nominal input, and $M$ controls inactive-component relaxation. 
The coefficients $q_\alpha>0$ and $q_M>0$ regularize $\alpha$ and $M$. Allowing $\alpha$ and $M$ to vary ensures feasibility at every state~\cite{MA-NA-JC:24-auto}. At a fixed state $\bfx$, the constraints are affine in $(\bfu,\alpha,M)$, so the problem~\eqref{eq:qp_cbf} is a QP.

%{\MMA{1. why only $\forall i$ in the 1st line of constraints? 2. I also find it better if we represent all the three types of constraints in the same way. Currently, the first uses gradient, the second uses Lie, and the third explicitly differentiates. A little messy. 3. I suggest presenting the optimization program before explaining its rationale. Instead, after presenting it formally, mention that constraints are really CBF constraints of the 3 types of safety specifications with some relaxation.}}, }\NA{Do we really need to refer to this paper here? We may get questions about what is new here.},% \NA{What are $\bar{\bff}$ and $\bar G$ below?} % \NA{There is overflow of the $\forall i$ terms below.}
The three inequalities correspond to the positional, higher-order, and velocity safety specifications, respectively. Because the safe set is a union of the component sets \(\calS_i\), not every component needs to be active at a given state. The term \(M\Delta_i(\bfx)\) relaxes the constraints associated with inactive components without adding or removing the constraints. If bubble $i$ is inactive, then \(\Delta_i(\bfx) > 0\) and multiplication with a large $M$ relaxes the constraint for $i$ as a large slack is added to the left-hand side. If $\Delta_i(\bfx) = 0$, i.e., bubble $i$ is active, then the constraint applies unmodified.
We refer to the QP in \eqref{eq:qp_cbf} as \textsc{BubbleCBF}.
% Since the all-component\NA{I would avoid using the term "all-component" because we have not defined what it means. I feel that this whole paragraph is not very useful. What is it explaining? That we need to add a term $\beta \alpha^2$ to the cost function? Why not just do that directly in (9) above and save a whole paragraph of vague explanations?} controller optimizes over \(\alpha\), we cannot directly prescribe relaxed or conservative CBF behavior through a fixed parameter. However, we can bias this behavior by including a weighted penalty on  \(\alpha\) (or on its deviation from a chosen reference/lower bound) in the QP cost matrix \(P\) in~\eqref{eq:qp_cbf}. With an appropriate reference, this encourages smaller \(\alpha\) values, which can cause the velocity CBF to activate more often, yielding slower but more controlled motion. This is especially relevant for small bubbles where the radius is smaller than 1,
% %\BL{what do we mean by fractional radii?}),
% where barrier scaling may produce steeper gradients and increase controller sensitivity.
% \BL{Move this to after the QP-CBF optimization problem~\eqref{eq:qp_cbf}.}
% Jointly optimizing $M$ and $\alpha$ provides the flexibility required by
% the feasibility result in~\cite{MA-NA-JC:24-auto} %\NA{We should avoid referring to prior work so much because it will raise questions about the novelty of this work.}.
% optimized jointly with $\bfu$ to improve feasibility while remaining bounded away from zero by $c_\alpha$.

\begin{figure*}
    \includegraphics[width=\linewidth,trim={0 0.0cm 0 0.0cm},clip]{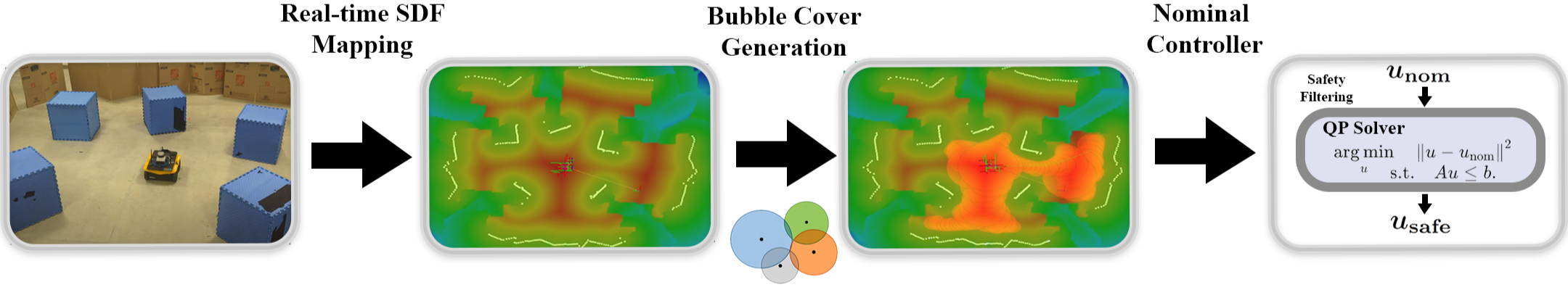}
    \caption{Application of \textsc{BubbleCBF} to differential-drive robot navigation. Using measurements from range sensors on the robot (1st panel), we estimate an SDF model of the obstacles (2nd panel) in real-time. We construct safe bubbles from the SDF (3rd panel) and define a nominal controller and constraints for the \textsc{BubbleCBF} optimization (4th panel).}
    %, which guarantees the forward invariance of the union of bubbles for a wide class of second-order systems.}
    \label{fig:overall_piepline}
\end{figure*}

\subsection{Feasibility and Safety Analysis}
\label{sec:analysis}
We now present conditions on the system dynamics and control design parameters that guarantee feasibility of the QP in \eqref{eq:qp_cbf} and forward invariance of the set $\calS$ in \eqref{eq:safe_set}. We place the following assumptions on the dynamics~\eqref{eq:second_order_dynamics}, which hold for Euler-Lagrange systems~\cite{RO-AL-PJN-HSR:98}.

\begin{assumption}\label{as:EL}
The dynamics
% \MMA{is 'dynamics singular?' I genuinely don't know :)}
~\eqref{eq:second_order_dynamics} satisfies the following:
    \begin{enumerate}[(a)]
    \item the function $G(\bfx)$ has right inverse $G^\dagger(\bfx)$ for all
      $\mathbf{x} \in \calX$;
  \item the right inverse $G^\dagger(\bfx)$ satisfies
    $\|G^\dagger(\bfx)\|_\infty \leq k_G$ for all $\bfx \in \calS$ and some $k_G < \infty$;
    \item the drift term $\mathbf{f}(\mathbf{x})$ is of the form $\mathbf{f}(\mathbf{x}) = \mathbf{f}_{q}(\mathbf{q}) + \mathbf{f}_{v}(\mathbf{v})$, where the velocity-dependent part satisfies $\|\mathbf{f}_v(\mathbf{v})\|_\infty \leq k_v\|\mathbf{v}\|_\infty$ for some $k_v >0$ and the position-dependent part satisfies $\|\mathbf{f}_q(\mathbf{q})\|_\infty \leq k_1$ for some $k_1 < \infty$ and all $\bfq \in \calQ$.
  \end{enumerate}
\end{assumption}

% \begin{assumption}\label{as:control}
% The nominal controller $\unom : \calX \to \real^m$ is a continuous function of $\bfx$.\NA{We can move this assumption directly in Problem 1, i.e., say "... with continuous nominal controller..."}
% \end{assumption}
Our feasibility construction requires intersecting bubbles to share a point strictly inside both. We exclude degenerate intersections that are tangent on the boundary, which can be enforced during generation.

\begin{assumption}[Bubble intersections have interior]\label{as:interior}
Let $S_{\cap}$ be the collection of all index sets with intersecting bubbles: if $\bigcap_{i \in \{i_1,\dots,i_\sigma\}}\calB_i \neq \emptyset$ then $\{i_1,\dots,i_\sigma\} \in S_\cap$.
For all $I \in S_{\cap}$, there exists a point $\bfy_I \in \bigcap_{i \in I}\interior{\calB_i}$.
%\NA{We should have this $\oprocend$ either everywhere or nowhere.}
\end{assumption}
%For the spherical bubbles we consider, 
% Assumption~\ref{as:interior} means no two bubbles touch only at their boundaries, which is easily enforced in practice.\NA{Move this explanation before the assumption as a way to introduce it.}

Pick an arbitrary $\bfy_I$ for every $I \in S_\cap$ and define the overlap size for every bubble $\calB_i$:
\begin{equation}\label{eq:def_delta_i}
  \delta_i^2 \coloneqq \max_{i \in I \in S_\cap} \|\bfc_i - \bfy_I\|^2.
\end{equation}
%The following lemma shows $\delta_i$ is strictly less than the radius of $\calB_i$.
%is well-posed \NA{What does "well-posed" mean?}

\begin{lemma}\label{lem:deltaPositive}
  For every bubble $\calB_i$, $\delta_i^2 < r_i^2$.
\end{lemma}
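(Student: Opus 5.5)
The plan is to unfold the definition of $\delta_i^2$ as a maximum over a finite set and bound each term strictly by $r_i^2$ using Assumption~\ref{as:interior}. The maximum in \eqref{eq:def_delta_i} runs over the index sets $I \in S_\cap$ with $i \in I$. First I will check that this collection is nonempty: the singleton $\{i\}$ belongs to $S_\cap$, since $\calB_i \neq \emptyset$ (it contains $\bfc_i$ because $r_i>0$). The collection is also finite, since it consists of subsets of the finite index set $\mathcal I$. Hence the maximum is attained at some $I^\star \in S_\cap$ with $i \in I^\star$, and
\[
\delta_i^2 = \|\bfc_i - \bfy_{I^\star}\|^2 .
\]

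Next I will invoke Assumption~\ref{as:interior}, which says $\bfy_{I^\star} \in \bigcap_{j \in I^\star} \interior{\calB_j}$. Because $i \in I^\star$, this gives in particular $\bfy_{I^\star} \in \interior{\calB_i}$. It remains to identify the interior of the closed ball $\calB_i$ with the open ball $\{\bfq \mid \|\bfq - \bfc_i\| < r_i\}$. Every point with $\|\bfq-\bfc_i\| = r_i$ is a limit of points $\bfc_i + (1+\eta)(\bfq - \bfc_i)$, $\eta \downarrow 0$, which lie outside $\calB_i$. Hence no boundary-sphere point is interior, and $\|\bfy_{I^\star}-\bfc_i\| < r_i$.

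Squaring both sides (both are nonnegative) yields $\delta_i^2 = \|\bfc_i - \bfy_{I^\star}\|^2 < r_i^2$, which is the claim.

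The only subtle point is this interior identification if $\calQ$ is a proper subset of $\mathbb{R}^n$, because $\calB_i$ is defined relative to $\calQ$. I would take the interior in $\mathbb{R}^n$, or equivalently assume $\calB_i$ lies in the interior of $\calQ$, which the SDF-based construction guarantees. Under that reading the argument above applies verbatim.

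The second point needing care is the attainment of the maximum. Finiteness of the collection makes it immediate; without it, the strict inequality could degrade to $\le$ in the limit.
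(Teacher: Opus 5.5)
Your proof is correct and follows the same route as the paper's. The maximum defining $\delta_i^2$ is attained at some index set $I$ containing $i$, Assumption~\ref{as:interior} places $\bfy_I$ in the interior of $\calB_i$, and hence $\|\bfc_i-\bfy_I\|^2<r_i^2$; your extra checks (that the family of such $I$ is finite and nonempty because $\{i\}\in S_\cap$, and that interior points of the closed ball satisfy the strict inequality when the interior is taken in $\mathbb{R}^n$) are details the paper leaves implicit.
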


\begin{proof}
  By~\eqref{eq:def_delta_i}, there exists $\bfy_I \in \interior{\calB_i}$ with $\delta_i = \|\bfc_i - \bfy_I\|$. Since $\bfy_I \in \interior{\calB_i}$, we have $r_i^2 > \|\bfc_i - \bfy_I\|^2 = \delta_i^2$.
\end{proof}

% We are now ready to state the main result\NA{Explain at a high-level what the main result is before stating it.}.

We establish a safety guarantee for $\calS$ under the controller~\eqref{eq:qp_cbf}.
Under Assumptions~\ref{as:EL} and~\ref{as:interior}, the parameters of~\eqref{eq:qp_cbf} can be chosen so that the optimization problem in \eqref{eq:qp_cbf} is feasible for any state in $\calS$. In turn, the controller resulting from \eqref{eq:qp_cbf} is continuous and renders the set $\calS$ forward invariant.

\begin{theorem}[$\calS$ is control invariant]\label{thm:safe}
Consider the set $\calS$ defined in~\eqref{eq:safe_set}, and let $r_{\max}\coloneqq\max_{i\in\mathcal I}r_i$. Under Assumptions~\ref{as:EL} and \ref{as:interior}, if the parameters $v_{\max}$, $\gamma$, and $\epsilon_i$ satisfy $\gamma < \frac{v_{\max}}{r_{\max}}$ and $\frac{1}{2}\gamma(r_i^2 - \delta_i^2) > \epsilon_i$ for all $i$ (which is always possible by Lemma~\ref{lem:deltaPositive}), the QP in \eqref{eq:qp_cbf} is feasible for all $\bfx \in \calS$, its solution $\bfu^*$ is continuous in $\bfx$, and $\calS$ is forward invariant for the system in \eqref{eq:second_order_dynamics} with controller $\bfu = \bfu^*(\bfx)$.
%\NA{Note that $\bfu^*$ is different from $\bfu^*$. We should use consistent notation.}.
\end{theorem}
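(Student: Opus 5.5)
The plan is to reduce the theorem to the union-of-intersections result of \cite{MA-NA-JC:24-auto}. That result says the following: if at every $\bfx\in\calS$ there is some \emph{active} index $i$ (one with $\Delta_i(\bfx)=0$) whose three component constraints can be satisfied \emph{simultaneously} by a single input $\bfu$, then three consequences follow. The QP~\eqref{eq:qp_cbf} with free $(\alpha,M)$ is feasible; its solution is continuous; and $\calS$ is forward invariant.

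The reason is that inactive indices have $\Delta_i>0$, so their constraints can always be met by taking $M$ large. The required slack depends continuously on $\bfx$, since $\Delta_i$ and the constraint data are continuous. The QP cost is strictly convex, so the minimizer is unique. Continuity then follows from a standard parametric-QP argument, using that a strictly feasible point exists locally when $\alpha$ and $M$ are enlarged. Invariance follows from the nonsmooth-barrier reasoning of that paper, because every active component's constraint is enforced unrelaxed. So the core work is a \emph{simultaneous feasibility} lemma for a single bubble.

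Fix $\bfx\in\calS$ and an active $i$. The positional constraint $\nabla b_i^\top\bfv+\gamma b_i\ge 0$ does not involve $\bfu$. It holds because $h_i\ge0$ gives $\dot b_i+\gamma b_i\ge\epsilon_i>0$. For the other two constraints, Assumption~\ref{as:EL}(a) lets me choose the acceleration $\mathbf a=\bff+G\bfu$ freely via $\bfu=G^\dagger(\mathbf a-\bff)$. So I must find $\mathbf a$ with
\[
\dot h_i=-2\|\bfv\|^2-2(\bfq-\bfc_i)^\top\mathbf a-2\gamma(\bfq-\bfc_i)^\top\bfv\ \ge\ -\alpha h_i
\]
and $-2\bfv^\top\mathbf a\ge-\alpha c$, with $\alpha$ free.

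The hardest case is when $h_i$ and $c$ are near zero together with $\bfq$ near $\partial\calB_i$. A directional choice handles it. I take $\mathbf a=-\kappa(\bfq-\bfc_i)-\lambda\bfv$ with $\kappa,\lambda\ge0$. This gives $-2\bfv^\top\mathbf a=2\kappa(\bfq-\bfc_i)^\top\bfv+2\lambda\|\bfv\|^2$. When $h_i\ge0$ the inward-velocity relation $(\bfq-\bfc_i)^\top\bfv\le\tfrac12(\gamma b_i-\epsilon_i)$ holds, but it does not fix a sign. So I split on the sign of $(\bfq-\bfc_i)^\top\bfv$.

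If the radial velocity is outward, pure radial deceleration ($\lambda=0$) might violate the speed constraint. Here the condition $\gamma<v_{\max}/r_{\max}$ enters. Near $\|\bfv\|=v_{\max}$ with $\bfq\in\calB_i$, it gives $|2(\bfq-\bfc_i)^\top\bfv|$ versus $\gamma b_i\le\gamma r_i^2<r_iv_{\max}$. This shows that a state on $\partial\calC$ with $h_i\ge 0$ cannot have large outward radial velocity. A combination with $\lambda>0$ then satisfies both constraints.

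If the radial velocity is inward, choose $\kappa$ large to make $\dot h_i$ large. This also helps $c$, since $(\bfq-\bfc_i)^\top\bfv<0$ flips the $\kappa$ term. The exception is $\bfq=\bfc_i$, where the $\kappa$ term vanishes. That case has $h_i=-2\cdot0+\gamma r_i^2-\epsilon_i>0$, so $\alpha$ large suffices.

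Where $\|\bfq-\bfc_i\|$ is small, I will use the margin $\tfrac12\gamma(r_i^2-\delta_i^2)>\epsilon_i$ together with Assumption~\ref{as:interior}. Let $I$ be the set of bubbles intersecting $\calB_i$. Every such bubble's witness point $\bfy_I$ lies within $\delta_i$ of $\bfc_i$. The margin guarantees that $h_i>0$ on $\{\|\bfq-\bfc_i\|\le\delta_i\}$ whenever the velocity is not outward. This prevents the degenerate configuration in which an active index switches to a neighbor at which $h_j=0$ and $b_j=0$ simultaneously with no controllable direction. I expect this to be the step that needs care.

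Boundedness of $\bfu$, needed for continuity, uses Assumption~\ref{as:EL}(b),(c): $\|\bff\|_\infty\le k_1+k_vv_{\max}$ and $\|G^\dagger\|\le k_G$ on $\calS$.

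\textbf{Main obstacle.} The main obstacle is the simultaneous-feasibility lemma at the corner points $h_i=c=0$ (and possibly $b_i=0$). These are exactly where the sign analysis above must be checked against the two parameter inequalities. If the directional ansatz fails there, I would fall back on proving that such corners are empty using those inequalities. Once the lemma holds, I would invoke the existence, continuity and invariance result of \cite{MA-NA-JC:24-auto} directly.
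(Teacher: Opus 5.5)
\textbf{There is a genuine gap: your reduction requires only one active bubble, but the QP requires all of them at once.}

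In~\eqref{eq:qp_cbf}, every index with $\Delta_i(\bfx)=0$ keeps its constraints with no $M$-relaxation, and all these constraints share the same $\bfu$ and $\alpha$. At a boundary point $s(\bfx)=0$, each active $i$ has a vanishing barrier, so $\alpha$ does not help there either. Feasibility therefore needs one input with $\dot\zeta>0$ for \emph{every} $i\in\Ic(\bfx)$ and every vanishing $\zeta\in\{b_i,h_i,c\}$, not for some single active $i$. You state this yourself when you say ``every active component's constraint is enforced unrelaxed''.

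Ties $h_i=h_j=0$ do occur, e.g.\ for $\bfq$ in the overlap of two bubbles. Your ansatz $\mathbf a=-\kappa(\bfq-\bfc_i)-\lambda\bfv$ adds $2\kappa(\bfq-\bfc_j)^\top(\bfq-\bfc_i)$ to $\dot h_j$. This term is negative whenever $\bfq$ sees the two centers at an obtuse angle, so an input that works for bubble $i$ need not work for bubble $j$.

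The missing idea is to aim at a \emph{common} point. Assumption~\ref{as:interior} gives $\bfy_{\Ic}\in\bigcap_{i\in\Ic(\bfx)}\interior{\calB_i}$, and the paper uses $\bfu=-kG^\dagger(\bfx)\bigl(\bfv+\tfrac12\gamma(\bfq-\bfy_{\Ic})\bigr)$. For each active $i$ with $h_i=0$, combine $h_i=0$, Young's inequality and $\|\bfc_i-\bfy_{\Ic}\|\le\delta_i$:
\[
-2(\bfq-\bfc_i)^\top\bigl(\bfv+\tfrac12\gamma(\bfq-\bfy_{\Ic})\bigr)=-\gamma r_i^2-\gamma(\bfq-\bfc_i)^\top(\bfc_i-\bfy_{\Ic})+\epsilon_i\le-\tfrac12\gamma(r_i^2-\delta_i^2)+\epsilon_i<0 .
\]
So the $k$-term in $\dot h_i$ is positive and bounded away from zero for all active bubbles simultaneously. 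One gain $k>a_1$, computed from the drift bounds of Assumption~\ref{as:EL}, then makes every $\dot h_i>0$. This is the real role of $\delta_i$ and of the margin $\tfrac12\gamma(r_i^2-\delta_i^2)>\epsilon_i$. Your use of them ($h_i>0$ on $\{\|\bfq-\bfc_i\|\le\delta_i\}$ for non-outward velocity) does not yield a jointly feasible input.

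\textbf{Your single-bubble speed analysis also has the signs reversed.} By your own formula $-2\bfv^\top\mathbf a=2\kappa(\bfq-\bfc_i)^\top\bfv+2\lambda\|\bfv\|^2$, radial deceleration \emph{helps} $c$ when the radial velocity is outward. It \emph{hurts} $c$ when the radial velocity is inward, since accelerating toward the center while already moving toward it raises the speed. So ``choose $\kappa$ large; this also helps $c$'' is false, and the real conflict between $h_i$ and $c$ is in the inward case. The paper resolves it by fixing the ratio of damping to position feedback: gain $k$ on $\bfv$ and $\tfrac12 k\gamma$ on $\bfq-\bfy_{\Ic}$. At $c=0$, using $\|\bfq-\bfy_{\Ic}\|\le 2r_i$, this gives
\[
\dot c\ge 2k(v_{\max}^2-\gamma v_{\max}r_{\max})-2v_{\max}k_1-2k_vv_{\max}^2 ,
\]
and this is where $\gamma<v_{\max}/r_{\max}$ enters.

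Your fallback, showing that corners with $h_i=c=0$ are empty, also fails. For $n\ge2$, a velocity of norm $v_{\max}$ tangent to the sphere $\|\bfq-\bfc_i\|^2=r_i^2-\epsilon_i/\gamma$ gives exactly such a point. This point lies on $\partial\calS$ with $i$ active.
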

%
% \marginJC{Shouldn't we say in the theorem statement that $\mathcal{S}$ as defined in (6)?}
%
% \MA{stated $\mathcal{S}$ and also the continuity that is implied from Thm 4.12 of \cite{MA-NA-JC:24-auto}}

\begin{proof}[Proof sketch]
%\NA{I don't find this proof sketch intuitive or easy to read. Then, what is the point of it? Perhpas, we can directly refer to the full proof in Appendix~\ref{app:proof_safe}.}
The full proof in Appendix~\ref{app:proof_safe} relies on \cite[Theorem~4.12]{MA-NA-JC:24-auto}, which shows that the existence of a control input satisfying the constraints of \eqref{eq:qp_cbf} at each point on the safe-set boundary $\partial\calS$ is sufficient to establish feasibility of \eqref{eq:qp_cbf}, continuity of its optimal solution in the state, and forward invariance of $\calS$.
%
% \marginJC{"sufficient for the feasibility of \eqref{eq:qp_cbf} and hence forward invariance". How about continuity?}
%
We show that the following controller satisfies the constraints:
\begin{equation}\label{eq:thm_feasible}
\bfu(\bfx) = -k\, G^{\dagger}(\bfx)\!\left(\bfv + \tfrac{1}{2}\gamma (\bfq - \bfy_{\Ic(\bfx)})\right),
\end{equation}
where $k > 0$ is a sufficiently large gain, $\Ic(\bfx) \coloneqq \{i \mid s_i(\bfx)=s(\bfx)\}$ is the active index set, and $\bfy_{\Ic(\bfx)}$ is the corresponding interior point that exists by Assumption~\ref{as:interior}, with $\bfy_{\emptyset} = \bfq$.
%The proof verifies that~\eqref{eq:thm_feasible} satisfies all QP constraints at the boundary of $\calS$.
\end{proof}

Although \eqref{eq:qp_cbf} is feasible and its solution renders $\calS$ forward invariant, the magnitude of the required control inputs may be unbounded because the proof relies on a sufficiently large gain $k > 0$. This may not be physically possible. Therefore, we next derive a sufficient condition on the available control authority that preserves feasibility under bounded inputs.

\begin{theorem}[Input limits]\label{thm:inputConstraints}
  Let $a_1$ and $a_2$ denote the gain thresholds defined in
  \eqref{eq:aCond1} and \eqref{eq:aCond2}. Let the allowable control set be constrained as $\Uc \supseteq  \{\bfu \;|\; \|\bfu\| \leq u_{\max}\}$. If 
\begin{equation}\label{eq:inputConstraints}
u_{\max}
>
k_G(v_{\max}+\gamma r_{\max})\max\left\{
a_1,a_2
\right\}.
\end{equation}
then the QP in \eqref{eq:qp_cbf} remains feasible.
% with the additional constraint $\|\bfu\| \leq u_{\max}$.
% \NA{show explicitly what the additional constraint is.}.
% for all $i$, then $v_{\max}$\BL{As what? Add the condition here. I forget what this was.} can be chosen so that the controller \eqref{eq:qp_cbf} is feasible with the additional constraint $\bfu \in \Uc$. 
\end{theorem}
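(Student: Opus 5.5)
The plan is to reuse the structure of the proof of Theorem~\ref{thm:safe}. By \cite[Theorem~4.12]{MA-NA-JC:24-auto}, feasibility of \eqref{eq:qp_cbf} with the extra constraint $\bfu\in\Uc$ follows once every boundary point $\bfx\in\partial\calS$ admits an admissible input satisfying the constraints of the active components. Interior points are handled by choosing $\alpha$ and $M$ large; this uses no extra input magnitude, because the relaxation terms $\alpha h_i$, $\alpha c$ and $M\Delta_i$ absorb any deficit. I therefore take the witness controller \eqref{eq:thm_feasible},
\[
\bfu(\bfx) = -k\, G^{\dagger}(\bfx)\bigl(\bfv + \tfrac{1}{2}\gamma (\bfq - \bfy_{\Ic(\bfx)})\bigr),
\]
and track the gain $k$ quantitatively instead of just sending it to infinity.

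The first step is to record, from the appendix proof of Theorem~\ref{thm:safe}, the two gain thresholds. The threshold $a_1$ in \eqref{eq:aCond1} is the value of $k$ above which the constraint on $h_i$ holds at boundary points with $h_i=0$. The threshold $a_2$ in \eqref{eq:aCond2} is the value above which the constraint on $c$ holds at points with $\|\bfv\|=v_{\max}$.

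Both come from the same computation. Substituting $\dot\bfv=\bff+G\bfu=\bff-k(\bfv+\tfrac12\gamma(\bfq-\bfy_I))$ into $\dot h_i$ and $\dot c$ gives a term linear in $k$ that is strictly positive on the relevant boundary pieces, plus a drift term. The positivity of the $k$-term uses $\tfrac12\gamma(r_i^2-\delta_i^2)>\epsilon_i$ for $h_i$, and $\gamma<v_{\max}/r_{\max}$ for $c$. The drift term is bounded by Assumption~\ref{as:EL}(c), using $k_1$, $k_v v_{\max}$ and the bounded geometry. Hence $k>\max\{a_1,a_2\}$ suffices. The positional constraint does not depend on $\bfu$ and was already handled in Theorem~\ref{thm:safe}.

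The second step is to bound the norm of this witness input. Using Assumption~\ref{as:EL}(b), I bound $\|\bfu\|\le k\,k_G\bigl(\|\bfv\|+\tfrac12\gamma\|\bfq-\bfy_I\|\bigr)$. On $\calS$ we have $\|\bfv\|\le v_{\max}$. Both $\bfq$ and $\bfy_I$ lie in an active bubble $\calB_i$ with $i\in I$, so $\|\bfq-\bfy_I\|\le 2r_i\le 2r_{\max}$. Together these give
\[
\|\bfu\|\le k\,k_G\,(v_{\max}+\gamma r_{\max}).
\]
The main care point is norm compatibility: $k_G$ bounds the $\infty$-norm of $G^\dagger$, while $u_{\max}$ is a Euclidean bound. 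I would read the bound in the same norm as \eqref{eq:inputConstraints}, possibly absorbing dimension constants into $k_G$ as the paper's statement implicitly does.

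The third step selects the gain. Condition \eqref{eq:inputConstraints} is strict, so some $k$ satisfies both $k>\max\{a_1,a_2\}$ and $k\,k_G(v_{\max}+\gamma r_{\max})\le u_{\max}$. For this $k$ the witness input lies in $\Uc$ and satisfies the active constraints at every boundary point. Invoking \cite[Theorem~4.12]{MA-NA-JC:24-auto} with the input set $\Uc$ then gives feasibility on $\calS$.

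The main obstacle is checking that $a_1$ and $a_2$ are genuinely uniform constants over $\partial\calS$, so that one fixed $k$ works everywhere. The danger is that the positive coefficient multiplying $k$ degenerates somewhere on the boundary. For $h_i$ this is prevented by the margin $\tfrac12\gamma(r_i^2-\delta_i^2)-\epsilon_i>0$. For $c$ it is prevented by $v_{\max}-\gamma r_{\max}>0$, which keeps $\bfv^\top(\bfv+\tfrac12\gamma(\bfq-\bfy_I))\ge v_{\max}(v_{\max}-\gamma r_{\max})$ bounded away from zero.
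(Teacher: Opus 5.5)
Your proposal is correct and follows essentially the same route as the paper. Like the paper, you bound the witness controller \eqref{eq:thm_feasible} by $\|\bfu\|\le k\,k_G(v_{\max}+\gamma r_{\max})$ via $\|\bfv\|\le v_{\max}$ and $\|\bfq-\bfy_{\Ic}\|\le 2r_{\max}$, then use the strict inequality \eqref{eq:inputConstraints} to pick $k$ with $\max\{a_1,a_2\}<k\le u_{\max}/(k_G(v_{\max}+\gamma r_{\max}))$. Your additional remarks, on interior points handled by large $\alpha$ and $M$ and on the $\infty$-norm versus Euclidean-norm mismatch in $k_G$, are points the paper's proof leaves implicit, not gaps in your argument.
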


\begin{proof} 
See Appendix~\ref{app:proof_bound}
\end{proof}

Theorem~\ref{thm:inputConstraints} gives a sufficient control-authority condition for feasibility of the optimization problem in \eqref{eq:qp_cbf}. The bound in \eqref{eq:inputConstraints} relates the admissible input magnitude to the system constants and the design parameters $\gamma$ and $v_{\max}$. It can be used either to assess whether given actuator limits are sufficient or to guide parameter selection under fixed input constraints. 

%Since the condition is sufficient, lower $u_{\max}$ may also be compatible.
% [Differential-drive running example analysis moved to the Application section (Sec.~\ref{sec:application})]

%% file: tex2/planning.tex
%
% \marginJC{But in the previous section, we never introduced the term \textsc{BubbleCBF}. We should have a sentence either at the beginning or end of Section IV that says "this is what we refer to as \textsc{BubbleCBF}".}
%

\section{Safe Differential-Drive Robot Navigation}
\label{sec:application}

We apply \textsc{BubbleCBF} to a differential-drive robot navigating in an unknown environment. Fig.~\ref{fig:overall_piepline} presents an overview of our approach. An online SDF estimate, which provides the distance from query points in the workspace to the nearest obstacles, is used to construct the union of bubbles \eqref{eq:positional_safety_requirement} with Algorithm~\ref{alg:bubble_graph}. 
% The bubble cover is used to construct a Bézier reference trajectory by first selecting a shortest path through the graph of overlapping bubbles and then optimizing a continuous trajectory within the selected bubble sequence \cite{lee2024safe}. 
% This reference trajectory is tracked by the nominal controller \(\unom\). 
These safety constraints are enforced by the \textsc{BubbleCBF} safety filter.

%\NA{Rewrite this sentence. Explain what a "reference path" is, how it is constructed, and how it is related to the nominal controller. If referring the BubbleCBF, use a consistent font.}.

% We first define the robot dynamics. We then describe how collision-free bubbles can be constructed online from the estimated SDF. Finally, we verify that the assumptions required by the proposed method are satisfied.\BL{this para is a bit repetitive.}

%\subsection{Differential Drive Robot}

%\NA{Avoid switching the naming between "differential-drive" and "unicycle". Choose one and stick with it but do not present both. Also, at this point, since we switch the presentation to be the control design first and then the application, we do not need to present this as an example. Directly talk about the application to autonomous robot navigation and introduce the robot dynamics.}
%\MA{Addressed, is "Definition" better?}

%\begin{definition}[Differential-drive model]\label{ex:diffdrive}
%We consider a robot described by differential-drive kinematics:

%Dynamics of the form~\eqref{eq:second_order_dynamics} arise in many mechanical systems. 
We consider a differential-drive robot with kinematics:
\begin{equation} \label{eq:diffdrive1}
    \begin{bmatrix}
        \dot{x} \\
        \dot{y} \\
        \dot{\theta}
    \end{bmatrix} =
    \begin{bmatrix}
        V \cos\theta \\
        V \sin\theta \\
        \omega
    \end{bmatrix},
\end{equation}
where $x$, $y$, and $\theta$ are the robot's position and heading, and $V$ and $\omega$ are the linear and angular speeds. By differentiating the translational kinematics in \eqref{eq:diffdrive1}, the model can be written in the second-order form of \eqref{eq:second_order_dynamics}:
%
% \marginJC{"eliminating" sounds odd to me. You mean that we express $\theta$ as a function of $v_x$ and $v_y$?}
% %
% the differential-drive model in \eqref{eq:diffdrive1} can be written in the second-order form of \eqref{eq:second_order_dynamics}:
%
\begin{equation}\label{eq:diffdrive2}
\begin{bmatrix}
\dot{v}_x\\
\dot{v}_y
\end{bmatrix}
=
\begin{bmatrix}
a \cos \theta - \omega \sqrt{v_x^2 + v_y^2}\, \sin \theta\\
a \sin \theta + \omega \sqrt{v_x^2 + v_y^2}\, \cos \theta
\end{bmatrix},
\;\; \theta = \operatorname{atan2}(v_y,\, v_x),
\end{equation}
with generalized position $\bfq = \begin{bmatrix} x & y \end{bmatrix}^\top$, generalized velocity $\bfv = \begin{bmatrix} v_x & v_y \end{bmatrix}^\top$, and control input $\bfu = \begin{bmatrix} a & \omega \end{bmatrix}^\top$ consisting of linear acceleration and angular speed. 
%\end{definition}
% 
%\NA{I don't think this section needs subsections.}

%\subsection{Bubble Cover in Unknown Environments}
% Lee et. al. ~\cite{KMBL-ZD-CLG-LW-NA-TVC:24} have shown that the free space can be modeled as a union of safe bubbles with radius set to the signed distance value at the center's location, as shown in the following definition.
The robot operates in a static environment with obstacles $\mathcal{O} \subset \mathcal{Q}$.
%\NA{Introduce the obstacles earlier, i.e., introduce the objective and robot dynamics, obstacles, and sensing in the beginning of this section.}.
Using a range sensor and an algorithm like Kernel-SDF \cite{kernelSDF} or iSDF \cite{isdf}, we estimate the workspace SDF:
% \NA{We should give examples of algorithms that are used to estimate SDF.} maintains an estimated signed distance field (SDF):
\begin{equation}\label{eq:distance_field}
d_t(\mathbf{q}) =
\begin{cases}
\;\;\;\min_{\mathbf{q}' \in \partial \mathcal{A}_t}\|\mathbf{q}-\mathbf{q}'\|, & \mathbf{q} \notin \mathcal{A}_t,\\[4pt]
-\min_{\mathbf{q}' \in \partial \mathcal{A}_t}\|\mathbf{q}-\mathbf{q}'\|, & \mathbf{q} \in \mathcal{A}_t,
\end{cases}
\end{equation}
where $\mathcal{A}_t \coloneqq \mathcal{O}_t \cup \mathcal{Z}_t$, $\mathcal{O}_t \subseteq \mathcal{O}$
% \NA{I am not sure that it is important to emphasize that $\calO$ is an estimate by using a hat. Why don't we have a hat on $\calU$? Do we do something about the error in our design? If we do not address the error in any way, it is not necessary to emphasize the hats. Even better, it would be fine to not use hats but mention somewhere that we can use the parameters in the CBF definition to tighten the safety constraints and account of estimation errors in the SDF mapping or robot localization.} 
is the observed obstacle set up to time $t$, and $\mathcal{Z}_t$ is the unobserved space (treated as occupied). Each SDF value induces a safe bubble of the form defined in \eqref{eq:positional_safe_ball}. At query point \(\bfq_i\), the SDF $d_t(\bfq_i)$ gives the distance to the nearest obstacle boundary, so setting $r_i \leq d_t(\bfq_i)$ yields an obstacle-free bubble. Since real-world SDF estimates are subject to uncertainty, the safety margin \(\nu\) can be chosen adaptively based on the estimator variance. In particular, we set $ 
\nu=\sqrt{\frac{1-p}{p}}\,\sigma_t(\mathbf{q}),
$ where \(\sigma_t(\mathbf{q})\) is the standard deviation of the SDF given by Kernel-SDF~\cite{kernelSDF} and \(p\) is a chosen risk probability. This ensures that the corresponding bubble is safe with probability at least \(1-p\) by Cantelli's inequality. Thus, the safety margin in Algorithm \(\ref{alg:bubble_graph}\) can be adjusted to account for sensor uncertainty.

To generate a collection of safe bubbles, we adapt
%\NA{What does "extend the application of" mean. Write this in a simpler way.} 
the rapidly-exploring bubble graph (RBG) algorithm of~\cite{lee2024safe} to static unknown environments. 
%The RBG algorithm generates a collection of such bubbles. 
%As shown in Alg.~\ref{alg:bubble_graph},
% \NA{The notation in the algorithm is not consistent. It switches between $\mathcal{B}$ and $B$. Also, neither $\mathcal{B}(\bfq,r)$ nor $B(\bfq,r)$ is defined. The use is in conflict with eq. (2).}, 
The RBG algorithm (Algorithm~\ref{alg:bubble_graph}) iteratively samples a random point, finds the closest bubble, and extends a new bubble toward the random point, similar to rapidly exploring random trees \cite{LaValle1998RapidlyexploringRT}. 

\begin{algorithm}[t]
\caption{Rapidly Exploring Bubble Graph}
\label{alg:bubble_graph}
\small
\begin{algorithmic}[1]
\Require Start configuration $\mathbf{q}_{0}$, SDF $d(\cdot)$, safety margin $\nu$, max. number of bubbles $N_{\mathrm{max}}$, min. radius $r_{\mathrm{min}}$.
\Ensure Collection $\mathfrak{B}$ of safe bubbles covering the free space.

\State $r_0 \gets d(\mathbf{q}_0)-\nu$
\State $\mathcal{B}_0 \gets \{\mathbf{q}\in\mathcal{Q}\mid \|\mathbf{q}-\mathbf{q}_0\|^2 \leq r_0^2\}$
\State $\mathfrak{B} \gets \{\mathcal{B}_0\}$

\While{$|\mathfrak{B}| < N_{\mathrm{max}}$}
    \State $\mathbf{q}_{\mathrm{rand}} \gets \Call{SampleRandom}{}$
    \State $\mathcal{B}_{\mathrm{near}} \gets \Call{NearestBubble}{\mathfrak{B}, \mathbf{q}_{\mathrm{rand}}}$

    \If{$\mathbf{q}_{\mathrm{rand}} \notin \mathcal{B}_{\mathrm{near}}$}
        \State $\mathbf{q}_{\mathrm{new}} \gets \Call{ExtendToward}{\mathcal{B}_{\mathrm{near}}, \mathbf{q}_{\mathrm{rand}}}$
        \State $r_{\mathrm{new}} \gets d(\mathbf{q}_{\mathrm{new}}) - \nu$
        \If{$r_{\mathrm{new}} > r_{\mathrm{min}}$}
            \State $\mathcal{B}_{\mathrm{new}} \gets \{\mathbf{q}\in\mathcal{Q}\mid \|\mathbf{q}-\mathbf{q}_{\mathrm{new}}\|^2 \leq r_{\mathrm{new}}^2\}$
            \State $\mathfrak{B} \gets \mathfrak{B} \cup \{\mathcal{B}_{\mathrm{new}}\}$
        \EndIf
    \EndIf
\EndWhile

\State \Return $\mathfrak{B}$
\end{algorithmic}
\end{algorithm}

% \BL{Also need to add the trajectory optimization? Maybe in experiments.}
In a static unknown environment, treating unobserved space as unsafe makes SDF values non-decreasing as the map is updated. Thus, the computed bubbles remain safe even if their radii change. The following result formalizes this property.

% we avoid per-step queries because the initially computed bubble chain remains certifiably safe. 
% Even though the SDF values are queried at a particular time $t$, the bubbles remain safe at all other times in the future, assuming a static environment.\BL{merge sentence}

\begin{proposition}\label{prop:unknown_space_bubble}
Consider bubbles generated using an SDF of $\mathcal{A}_t$. The following holds:
\begin{enumerate}
    \item for any $t$ and $\bfq \in \calQ$, $\calB_{t}(\bfq) \subset (\calQ \setminus \calO)$;
    \item for any $t_2 \geq t_1$, and any $\bfq \in \calQ$, $\calB_{t_1}(\bfq) \subseteq \calB_{t_2}(\bfq)$;
    \item for any $t_2 \geq t_1$ and any $\bfq_1, \bfq_2 \in \calQ$, $\calB_{t_1}(\bfq_1) \cap \calB_{t_1}(\bfq_2) \neq \emptyset$ implies $\calB_{t_2}(\bfq_1) \cap \calB_{t_2}(\bfq_2) \neq \emptyset$.
\end{enumerate}
\end{proposition}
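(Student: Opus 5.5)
The plan is to first fix the meaning of $\calB_t(\bfq)$. It is the closed ball centered at $\bfq$ with radius $r_t(\bfq) \coloneqq d_t(\bfq) - \nu$, taken to be empty when this radius is nonpositive. This matches Algorithm~\ref{alg:bubble_graph}. The whole argument rests on one monotonicity fact. In a static environment the unobserved region can only shrink and the observed obstacle set can only grow inside the true obstacles. So $\mathcal{A}_{t_2} \subseteq \mathcal{A}_{t_1}$ for $t_2 \geq t_1$, and $\calO \subseteq \mathcal{A}_t$ for every $t$, because every true obstacle point is either observed or unobserved. I will state these two inclusions explicitly as standing modeling assumptions, since they encode what "static" and "treated as occupied" mean.

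For (i), take $\bfq$ with $r_t(\bfq) > 0$. Then $d_t(\bfq) > 0$, so $\bfq \notin \mathcal{A}_t$. Pick any $\bfq' \in \calB_t(\bfq)$ and suppose $\bfq' \in \calO \subseteq \mathcal{A}_t$. The segment from $\bfq$ to $\bfq'$ starts outside $\mathcal{A}_t$ and ends in it, so it meets $\partial\mathcal{A}_t$ at some point $\bfz$. Then $d_t(\bfq) \leq \|\bfq - \bfz\| \leq \|\bfq - \bfq'\| \leq d_t(\bfq) - \nu$. This is a contradiction whenever $\nu > 0$. If $\nu = 0$, the contradiction comes from the boundary-point case, and closedness of $\calO$ or openness conventions are needed for strict containment. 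I expect this boundary and strictness subtlety to be the only delicate point, and I would handle it by noting that $\nu > 0$ in the algorithm.

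For (ii), it suffices to show that $d_t(\bfq)$ is nondecreasing in $t$ wherever the bubble is nonempty. Take $\bfq \notin \mathcal{A}_{t_1}$. Then $\bfq \notin \mathcal{A}_{t_2}$ as well, and for exterior points $d_t(\bfq) = \operatorname{dist}(\bfq, \mathcal{A}_t)$. Since $\mathcal{A}_{t_2} \subseteq \mathcal{A}_{t_1}$, the distance to the smaller set is at least as large, so $d_{t_2}(\bfq) \geq d_{t_1}(\bfq)$. Hence $r_{t_2}(\bfq) \geq r_{t_1}(\bfq)$, and concentric balls are nested, which gives $\calB_{t_1}(\bfq) \subseteq \calB_{t_2}(\bfq)$. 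If instead $\bfq \in \mathcal{A}_{t_1}$, then $\calB_{t_1}(\bfq)$ is empty and the inclusion is trivial.

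Item (iii) follows directly from (ii). Any point in $\calB_{t_1}(\bfq_1) \cap \calB_{t_1}(\bfq_2)$ lies in both $\calB_{t_2}(\bfq_1)$ and $\calB_{t_2}(\bfq_2)$.
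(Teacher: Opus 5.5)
Your proof is correct and follows the paper's route: shrinking $\mathcal{A}_t$ makes the SDF, and hence the radii, nondecreasing, so bubbles are nested and overlaps persist. Deriving (iii) directly from the nesting in (ii) is a cleaner version of the paper's center-distance inequality $\|\bfq_i-\bfq_{i+1}\|\le r_i^t+r_{i+1}^t$.

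You are also more careful than the paper in two places. The paper dispatches (i) in one sentence. It also obtains $\mathcal{A}_{t+1}\subseteq\mathcal{A}_t$ from $\mathcal{Z}_{t+1}\subseteq\mathcal{Z}_t$ alone, which tacitly requires newly observed obstacle points to have been unobserved before.

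You are right that $\nu>0$ is what makes (i) hold for closed bubbles in general. One correction to your aside, though: closedness of $\calO$ is not a fix. With $\nu=0$, a closed $\calO$ is exactly the case where a bubble can touch an obstacle, and only openness of $\calO$ or of the balls would rescue (i).
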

% \BL{State this as a formal statement, and in the later paragraph explain how this relates to the bubble cover.}
% \ZD{I am confused. Are bubbles shrinking $\calB_{t_2}(\bfq) \subseteq \calB_{t_1}(\bfq)$ or expanding $d_{t+1}(\mathbf{q}) \ge d_t(\mathbf{q}), \quad \forall \mathbf{q}$.}
% \MA{Zhirui, fixed}

\begin{proof}
% \BL{Need to refine}
% \MA{refined I believe}
% The statements follow from the fact that the distance field, and hence the radii of bubbles, is non-decreasing for all \(\mathbf{q}\), \(d_{t+1}(\mathbf{q}) \ge d_t(\mathbf{q})\).
% The non-decreasing property can be seen by noting that, since \(\mathcal{U}_{t+1}\subseteq \mathcal{U}_t\), we have \(\mathcal{A}_{t+1}\subseteq \mathcal{A}_t\). Therefore, by the definition of the distance field as a minimum over \(\mathcal{A}_t\),
% \[
% d_{t+1}(\mathbf{q}) \ge d_t(\mathbf{q}), \qquad \forall \mathbf{q}.
% \]

% Now if adjacent bubbles initially overlap,
% \[
% \|\mathbf{q}_i-\mathbf{q}_{i+1}\| \le r_i^t+r_{i+1}^t,
% \]
% then
% \[
% \|\mathbf{q}_i-\mathbf{q}_{i+1}\| \le r_i^{t+1}+r_{i+1}^{t+1},
% \]
% so overlap, and thus connectivity, is preserved.
% Safety is also preserved since each radius is given by the SDF value. Therefore, the initial bubble cover remains safe and connected as the SDF updates.
Since $\mathcal{Z}_{t+1}\subseteq \mathcal{Z}_t$, we have $\mathcal{A}_{t+1}\subseteq \mathcal{A}_t$, and therefore
\[
d_{t+1}(\mathbf{q}) \ge d_t(\mathbf{q}), \qquad \forall \mathbf{q}.
\]
Hence, the bubble radii are non-decreasing, $r_i^{t+1}\ge r_i^t$. Thus, if adjacent bubbles overlap at time $t$, i.e., $\|\mathbf{q}_i-\mathbf{q}_{i+1}\| \le r_i^t+r_{i+1}^t$, then $\|\mathbf{q}_i-\mathbf{q}_{i+1}\| \le r_i^{t+1}+r_{i+1}^{t+1}$ as well, so connectivity is preserved. Safety is also preserved since each radius is given by the SDF. Therefore, the initial bubble cover remains safe and connected under SDF updates.
\end{proof}

Proposition~\ref{prop:unknown_space_bubble} implies that a bubble cover that is initially safe and connected remains safe and connected under subsequent SDF updates. Consequently, the robot can traverse the original bubble cover without repeatedly querying the SDF.

%\subsection{Feasibility Analysis for Differential-Drive Dynamics}

%We examine the assumptions and analyses in the example of differential-drive robots in~\eqref{eq:diffdrive}. 
%First, the dynamics in~\eqref{eq:diffdrive} violates Assumption~\ref{as:EL}(a) when $\sqrt{v_x^2 + v_y^2} = 0$ as this is the gain for $\omega$.

% If the system state lies in the safe set, a speed equal to zero keeps the state within the safe set. Therefore, although the CBF condition formally requires $\|\bfv\| \neq 0$, the case $\|\bfv\| = 0$ does not compromise safety since the system remains in the safe set.

Next, we examine the feasibility of the controller \eqref{eq:qp_cbf} for differential-drive robots.

\begin{corollary}[Differential-drive safe control feasibility under input bounds]
\label{thm:diffdrive_bounds}
For the differential-drive model in \eqref{eq:diffdrive2}, let
\[
v_{\max}=\eta\gamma r_{\max},\qquad
\epsilon_i=\rho_i\frac{\gamma}{2}(r_i^2-\delta_i^2),
\]
with $\eta>1$ and $\rho_i\in(0,1)$. Fix any $0<\beta<\eta$ and let
$v_{\min}:=\beta\gamma r_{\max}$. Then, for any $u_{\max}>0$ and
sufficiently small $\gamma>0$, the QP in \eqref{eq:qp_cbf}, with
$\bfu\in\Uc$, is feasible at states satisfying
$\|\bfv\|> v_{\min}$.
\end{corollary}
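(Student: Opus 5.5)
The plan is to reduce the corollary to Theorem~\ref{thm:inputConstraints}. That theorem applies once we check that the differential-drive model satisfies Assumption~\ref{as:EL} on the region $\|\bfv\|>v_{\min}$, and that the bound \eqref{eq:inputConstraints} can be met for a fixed $u_{\max}$ by shrinking $\gamma$.

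First we verify the assumption and extract the constants. From \eqref{eq:diffdrive2}, $\bff\equiv 0$, so Assumption~\ref{as:EL}(c) holds with $k_1=0$ and any $k_v>0$. The input matrix is
\[
G(\bfx)=\begin{bmatrix}\cos\theta & -\|\bfv\|\sin\theta\\ \sin\theta & \|\bfv\|\cos\theta\end{bmatrix}=R(\theta)\,\mathrm{diag}(1,\|\bfv\|),
\]
with $R(\theta)$ a rotation. It is invertible whenever $\|\bfv\|>0$, with $G^{-1}=\mathrm{diag}(1,1/\|\bfv\|)R(\theta)^\top$. On $\|\bfv\|>v_{\min}$ we therefore get $\|G^\dagger\|\le c_0\max\{1,1/v_{\min}\}$, where $c_0$ is a norm-equivalence constant for the $\infty$-norm. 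For small $\gamma$ we have $v_{\min}=\beta\gamma r_{\max}<1$, so we may take $k_G=c_0/(\beta\gamma r_{\max})$. The proof of Theorem~\ref{thm:safe}, through \eqref{eq:thm_feasible}, only uses $G^\dagger$ at the states in question, so restricting to $\|\bfv\|>v_{\min}$ is enough. Next, $\gamma<v_{\max}/r_{\max}$ is equivalent to $\eta>1$, and $\tfrac12\gamma(r_i^2-\delta_i^2)>\epsilon_i$ is equivalent to $\rho_i<1$. Hence the hypotheses of Theorem~\ref{thm:safe} hold for every $\gamma>0$.

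The core step is the scaling of \eqref{eq:inputConstraints} in $\gamma$. Substituting, the right-hand side becomes
\[
\frac{c_0}{\beta\gamma r_{\max}}(\eta+1)\gamma r_{\max}\max\{a_1,a_2\}=\frac{c_0(\eta+1)}{\beta}\max\{a_1,a_2\}.
\]
The $1/\gamma$ blowup of $k_G$ thus exactly cancels the $\gamma$ factor in $v_{\max}+\gamma r_{\max}$. It remains to show that $\max\{a_1,a_2\}\to0$ as $\gamma\to0$, with $\eta,\rho_i,\beta$ held fixed; then the bound drops below any given $u_{\max}$.

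The main obstacle is this step, since $a_1,a_2$ are defined in the appendix via \eqref{eq:aCond1}--\eqref{eq:aCond2}. We expect them to be gain thresholds making the boundary constraints of \eqref{eq:qp_cbf} hold under \eqref{eq:thm_feasible}. They should take the form of a ratio of drift terms to the margin provided by the feedback. With $\bff=0$ (so $k_v,k_1$ can be taken arbitrarily small), $\|\bfv\|\le v_{\max}=O(\gamma)$, and margins $\epsilon_i$ and $\gamma(r_i^2-\delta_i^2)$ that are also $O(\gamma)$, the natural scaling gives $a_j=O(\gamma)$. For instance, the velocity constraint at $\|\bfv\|=v_{\max}$ needs $k\,\bfv^\top(\bfv+\tfrac{\gamma}{2}(\bfq-\bfy))\ge 0$. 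This holds by the Cauchy--Schwarz inequality since $v_{\max}>\gamma r_{\max}\ge\tfrac{\gamma}{2}\|\bfq-\bfy\|$, and the required $k$ is proportional to $\gamma$ once the drift is absent. If the formulas instead give $a_j$ that are only $O(1)$, we would redo the estimates directly with $\bff=0$ to recover the $\gamma$ factor. Concluding, for $\gamma$ small enough, $u_{\max}$ exceeds the right-hand side of \eqref{eq:inputConstraints}, and Theorem~\ref{thm:inputConstraints} gives feasibility with $\bfu\in\Uc$ at all states with $\|\bfv\|>v_{\min}$.
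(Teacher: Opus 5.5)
Your proposal is correct and follows essentially the same route as the paper: reduce to Theorem~\ref{thm:inputConstraints}, use $\bff\equiv 0$ to eliminate $k_1,k_v$, take $k_G$ of order $(\beta\gamma r_{\max})^{-1}$ on $\|\bfv\|>v_{\min}$ so that it cancels the factor $\gamma$ in $v_{\max}+\gamma r_{\max}=(\eta+1)\gamma r_{\max}$, and show that the gain thresholds are $O(\gamma)$. The step you left as an expectation is confirmed by \eqref{eq:aCond1}--\eqref{eq:aCond2}: with $k_1=k_v=0$ one gets $a_2=0$ and $a_1=\gamma L$ with $L=\max_i 4(\eta^2r_{\max}^2+\eta r_ir_{\max})/\bigl((1-\rho_i)(r_i^2-\delta_i^2)\bigr)$, so the condition becomes $u_{\max}>L(\eta+1)\gamma/\beta$ (times your norm constant $c_0$), which holds for sufficiently small $\gamma$.
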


\begin{proof}
See Appendix~\ref{app:diffdrive_bounds}.
\end{proof}
An important requirement is $\|\bfv\|>v_{\mathrm{min}}>0$. At $\|\bfv\|=0$, $G$ loses rank and $\|G^\dagger(\bfx)\|_\infty$ becomes unbounded, which can lead to infeasibility because it violates the assumption on bounded $k_G$. In practice, we recover from low speed by turning in place toward the nearest center of a bubble containing the robot, then accelerating toward it. The straight-line segment stays inside the bubble, so this motion remains collision free.

%% file: tex2/results.tex
\section{Experiments}\label{sec:experiments}

We evaluate whether \textsc{BubbleCBF} can maintain safe, goal-directed navigation on a differential-drive robot as nominal tracking degrades. We compare it against the unfiltered nominal controller and a first-order SDF-CBF baseline \cite{CBFdeadlock}. In simulation, we isolate the effect of increasing reference speed. In real-world experiments, we evaluate the complete navigation system on an exploration task in an initially unknown environment, where the robot builds a map online and follows generated reference paths to expand its coverage during a two-minute interval.

Across both studies, each trial is assigned one of four mutually exclusive outcomes: arrived, collided, stuck, or timed out. A trial is successful if the robot reaches the goal and is classified as a deadlock if it remains stationary until a time budget of 60 seconds expires. A trial that remains in motion without reaching the goal is classified as a timeout. We report the success rate (SR), collision rate (CR), deadlock rate (DR), and timeout rate (TR). For the real-world experiments, we introduce two additional metrics that are relevant to the exploration objective, covered area (CA) to account for the area covered by the bubble cover and average distance to the closest obstacle (OD).

% \NA{Should we provide an overview of how this section is organized?}
% Our experiments are sectioned into two categories, simulation and real world experiments. 

\subsection{Simulated Autonomous Navigation}
The nominal controller combines pure pursuit steering with proportional speed
control, with fixed gains $\beta_v$ and $\beta_{\theta}$, and  the current lookahead point $[x_L,y_L]^\top$ across all experiments.
Increasing $v_{\mathrm{des}}$ therefore tests performance under more demanding
tracking conditions. The nominal control is given by \begin{equation}
\label{eq:nominal}
\bfu_{\mathrm{nom}} =
\begin{bmatrix}
  \beta_v\,\dfrac{v_{\mathrm{des}} - \|\mathbf v\|}{\Delta t} \\
  \beta_\theta\,\!\bigl(\operatorname{atan2}(y_L - y,\; x_L - x) - \theta\bigr)
\end{bmatrix}.
\end{equation} Both safety filters receive the same nominal input from the controller~\eqref{eq:nominal}.
The first filter is the proposed bubble filter \eqref{eq:qp_cbf}. The second filter is an SDF-CBF baseline with barrier
$d(\bfq)-\epsilon_d$, where $d$ is assumed continuously differentiable
and $\epsilon_d>0$ is a safety margin. It uses the kinematic
unicycle
$\dot{\bfq}=[V\cos\theta,V\sin\theta,\omega]^\top$
with input $\bfu_d=[V,\omega]^\top$. We convert the nominal acceleration
into a speed command over one control step $\Delta t$, giving
$\bfu_{d,\mathrm{nom}}=[\|\bfv\|+a_{\mathrm{nom}}\Delta t,\;\omega_{\mathrm{nom}}]^\top$.
The filtered input is given by
\begin{equation}
\label{eq:qp_sdf}
\begin{aligned}
\min_{\bfu_d}\quad
& \|\bfu_d-\bfu_{d,\mathrm{nom}}(\bfx)\|^{2} \\
\text{s.t.}\quad
& \nabla d(\bfq)^\top \bfe_\theta V
  + \gamma_d\bigl(d(\bfq)-\epsilon_d\bigr) \ge 0,
\end{aligned}
\end{equation}
where $\bfe_\theta=[\cos\theta,\sin\theta]^\top$ and $\gamma_d>0$
is the CBF gain.
The filtered speed $v^*$ is mapped back to the plant input as
$a=(v^*-\|\bfv\|)/\Delta t$. The constraint involves only $v$, so
$\omega^*=\omega_{\mathrm{nom}}$. In effect, the baseline can slow the robot down but cannot
steer it away from obstacles. We evaluate $v_{\mathrm{des}}\in\{1.0,2.5,5.0\}$~m/s over 100 trials per speed, using a common goal and identical randomly sampled initial states across all methods. As shown in Table~\ref{tab:toy_exp_outcomes}, the performance gap widens as the desired speed increases. At $5$~m/s, the nominal controller collides in 62 trials, while SDF-CBF deadlocks in 19. In contrast, \textsc{BubbleCBF} remains collision- and deadlock-free in all 100 trials. The remaining \textsc{BubbleCBF} timeouts occur because the nominal command overshoots the goal despite remaining within the safe set. Overall, these results demonstrate \textsc{BubbleCBF}'s robustness to increasing tracking difficulty. Fig.~\ref{fig:toy_exp_outcomes} illustrates representative trajectories from one such scenario.

\begin{figure*}
    \centering
    \includegraphics[width=\linewidth]{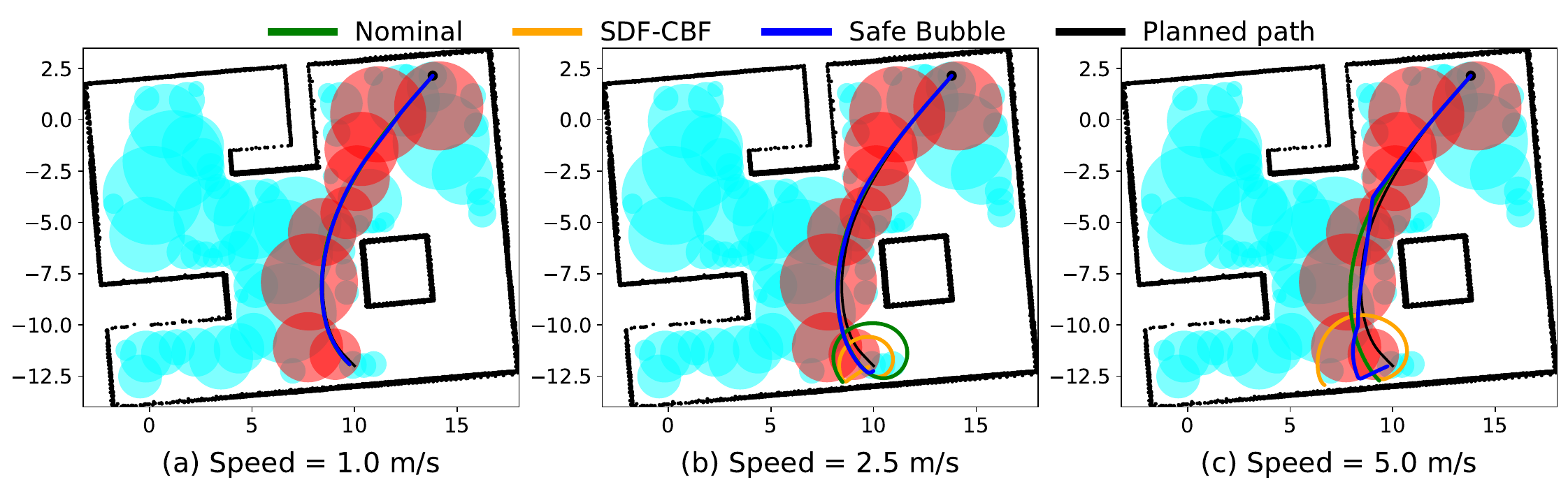}
\caption{All planners share the same start and goal positions. Cyan denotes the bubble cover, red the path bubbles, and black the planned trajectory. The nominal controller collides at speeds above $1$~m/s (panels (b) and (c)), while SDF-CBF deadlocks at $5$~m/s (panel (c)).}
    \label{fig:toy_exp_outcomes}
\end{figure*}

\begin{figure}
    \centering
    \includegraphics[width=\linewidth]{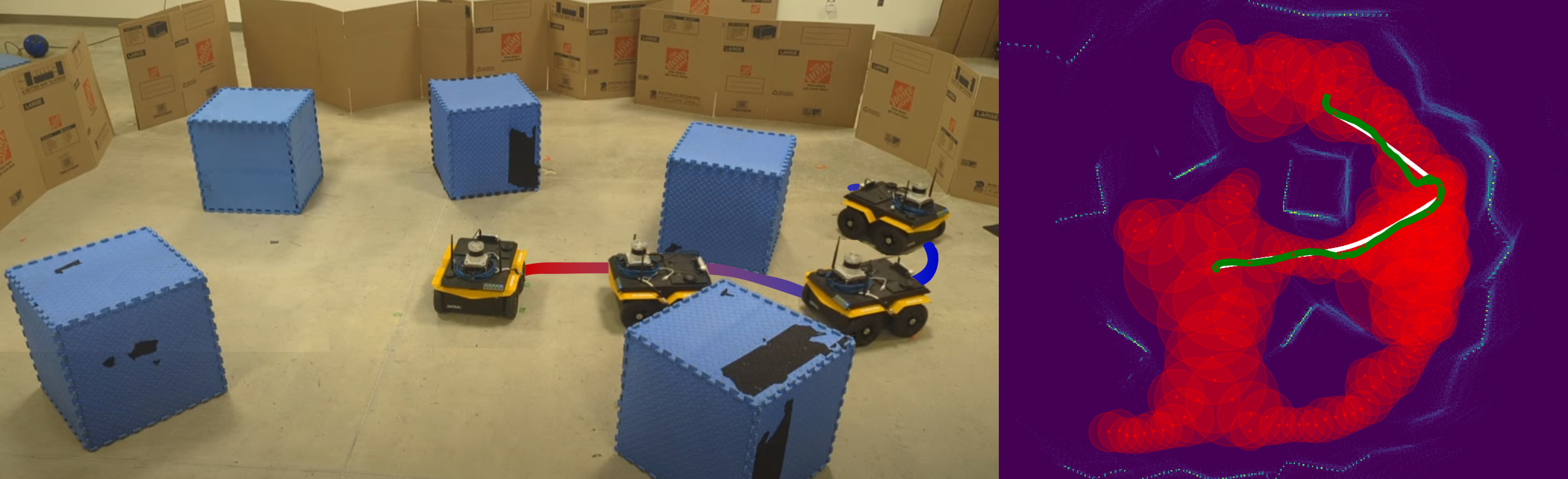}
    \caption{Navigation experiment in an a priori unknown environment. Left: the robot and cluttered test environment. Right: the generated bubble cover, with the executed robot trajectory in green and the reference B\'ezier  trajectory in white.}
    \label{fig:experimentationenv}
\end{figure}

\subsection{Real-World Autonomous Navigation}
We evaluate navigation on a Clearpath Jackal differential-drive robot equipped
with an Ouster OS1 LiDAR in the initially unknown cluttered environment shown
in Fig.~\ref{fig:experimentationenv}. Direct LiDAR-inertial odometry
(DLIO)~\cite{chen2022dlio} provides state estimates using the OS1's point cloud
and built-in IMU, while Kernel-SDF~\cite{kernelSDF} estimates the SDF online
from LiDAR observations. The robot’s linear and angular speeds are constrained to the Jackal’s maximum allowable values.

The RBG procedure in Algorithm~\ref{alg:bubble_graph} constructs a graph of
overlapping safe bubbles from the current SDF. An exploration target is
selected from the frontier bubbles of this cover. A shortest graph path
to the target determines a sequence of bubbles, and the reference trajectory
is generated as a B\'ezier curve constrained to lie within their union.
The bubble cover defines the CBF safe set. We run six trials per method and report SR, CR, DR, CA, and OD.
Each trial has a two-minute exploration budget and is successful if the
robot completes this interval without collision or deadlock. TR is not
reported because reaching the time limit marks completion of the exploration
trial rather than failure to reach a goal.
Table~\ref{tab:realworldres} summarizes the real-world results.
\textsc{BubbleCBF} completed all trials without collision or deadlock.
SDF-CBF remained collision-free but deadlocked in 50\% of the trials, and the nominal controller collided in 83.3\% of the trials. 

The observed difference in deadlock behavior is consistent with the structure
of the two safety filters. Under differential-drive kinematics, the first-order SDF-CBF constraint does not involve angular velocity $\omega$, so the filter
modifies translational speed while leaving the nominal steering command
unchanged. When the robot points toward an obstacle, the constraint restricts
forward speed as clearance decreases and can bring the robot to a stop at
the safety margin. If the nominal
steering command does not reorient the robot toward a feasible direction
of motion, the robot can remain stopped. Such deadlocks have also been reported in~\cite{CBFdeadlock}. In contrast, the smooth bubble barriers yield a higher-order constraint that includes both linear acceleration and angular
velocity, allowing \textsc{BubbleCBF} to also 
modify steering.

%  while continuing toward the goal

% % Requires: \usepackage{multirow}

% \begin{table}[t]
% \caption{Simulation performance across increasing reference speeds. Success, collision, and deadlock counts are reported over 10 trials for each method and speed. BubbleCBF maintains collision-free, deadlock-free navigation across all tested speeds.}
% \label{tab:toy_exp_outcome}
% \centering
% \small
% \renewcommand{\arraystretch}{1.0}
% \begin{tabular}{c|l|c|c|c}
% \hline
% \textbf{$v$} & \textbf{Method} & \textbf{Success} & \textbf{Collision} & \textbf{Deadlock} \\
% \hline
% \multirow{3}{*}{5.0}
%   & Nominal     & 5  & 5 & 0 \\
%   & BubbleCBF & \bf{10} & 0 & 0 \\
%   & SDF-CBF     & 8  & 0 & 2 \\
% \hline
% \multirow{3}{*}{2.5}
%   & Nominal     & 7  & 3 & 0 \\
%   & BubbleCBF & \underline{10} & 0 & 0 \\
%   & SDF-CBF     & \underline{10} & 0 & 0 \\
% \hline
% \multirow{3}{*}{1.0}
%   & Nominal     & \underline{10} & 0 & 0 \\
%   & BubbleCBF & \underline{10} & 0 & 0 \\
%   & SDF-CBF     & \underline{10} & 0 & 0 \\
% \hline
% \end{tabular}
% \end{table}
\begin{table}[t]
\caption{Simulation performance at increasing reference speeds. SR,
CR, TR, and DR are reported over 100 randomly sampled
initial positions for each method and speed. navigation at all tested speeds.}
\label{tab:toy_exp_outcomes}
\centering
\small
\renewcommand{\arraystretch}{1.0}
\begin{tabular}{c|l|cccc}
\hline
\textbf{$v$} & \textbf{Method} &
\textbf{SR (\%)} & \textbf{CR (\%)} &
\textbf{DR (\%)} & \textbf{TR (\%)} \\
\hline
\multirow{3}{*}{5.0}
  & Nominal   & 38  & 62 & 0 &0 \\
  & BubbleCBF & \textbf{100} & \textbf{0} & \textbf{0} &0\\
  & SDF-CBF   & 81  & \textbf{0} & 19 &0\\
\hline
\multirow{3}{*}{2.5}
  & Nominal   & 84  & 16 & 0 &0\\
  & BubbleCBF & \textbf{96} & \textbf{0} & \textbf{0} & 4 \\
  & SDF-CBF   & 87  & \textbf{0} & 13&0 \\
\hline
\multirow{3}{*}{1.0}
  & Nominal   & 95  & 5 & 0 &0\\
  & BubbleCBF & \textbf{100} & \textbf{0} & \textbf{0} &0\\
  & SDF-CBF   & 98  & \textbf{0} & 2 &0\\
\hline
\end{tabular}
\end{table}

\begin{table}[t]
\centering
\caption{Results for real-world experiments. CR, SR, DR, CA, and OD from obstacles. Start goals were identical, each planner had an exploration budget of 2 minutes. Success is defined when the robot does not experience a collision or deadlock in the time budget.}
\label{tab:realworldres}
\setlength{\tabcolsep}{5pt}
\renewcommand{\arraystretch}{1.15}
\begin{tabular}{lccc}
\hline
\textbf{Metric} & \textbf{BubbleCBF} & \textbf{Nominal} & \textbf{SDF CBF} \\
\hline
CR (\%) $\downarrow$ & \textbf{0.0} & 83.3 & \textbf{0.0} \\
SR (\%) $\uparrow$  & \textbf{100.0} & 16.7 & 50.0 \\
DR (\%) $\downarrow$ & \textbf{0.0} & \textbf{0.0} & 50.0 \\
CA (m$^2$) $\uparrow$       & 8.42 $\pm$ 0.89 & 7.86 $\pm$ 1.75 & \textbf{8.75 $\pm$ 0.80} \\
OD (m) $\uparrow$ & \textbf{0.720 $\pm$ 0.030} & 0.714 $\pm$ 0.052 & 0.687 $\pm$ 0.028 \\
\hline
\end{tabular}
\end{table}
We further evaluate \textsc{BubbleCBF} in a larger lobby environment, constructing the SDF and bubble cover online during navigation. Fig.~\ref{fig:env_views} shows the resulting trajectories and bubble cover, with performance summarized in Table~\ref{tab:largeworldres}.

% \subsection{Robustness to SDF service performance}
% In this section, we showcase that the BubbleCBF is able to show robustness to how fast the SDF service can return values. Since the bubbles are saved as a simple geometric primitive that can be loaded into the QP as constraints, it does not need to be called at each control loop. This decouples the control loop with perception updates. However, a typical SDF-CBF will need the gradient and the SDF value at each control iteration. The SDF service can prove as a bottleneck. Figure~\ref{fig:CBF_control_rate} showcases the robustness of the BubbleCBF over SDF-CBF when it comes to maintaining high control frequency.
% \begin{figure}
%     \centering
%     \includegraphics[width=1\linewidth]{toyResults/SDF Speed Comparison/SDF_speed_comparison.png}
%     \caption{Comparisson of final control frequency between the BubbleCBF and SDF-CBF}
%     \label{fig:CBF_control_rate}
% \end{figure}
\begin{figure}[t]
    \centering
    % \includegraphics[width=0.32\textwidth]{RealEnvFigs/Env1_cropped_same_size.png}\hfill
    % \includegraphics[width=0.32\textwidth]{RealEnvFigs/Env2_cropped_same_size.png}\hfill
    % \includegraphics[width=0.32\textwidth]{RealEnvFigs/Env3_cropped_same_size.png}
    % \caption{Robot platform in the indoor environment from three viewpoints.}
    \includegraphics[width=\linewidth,trim={0 5.5cm 0 6cm},clip]{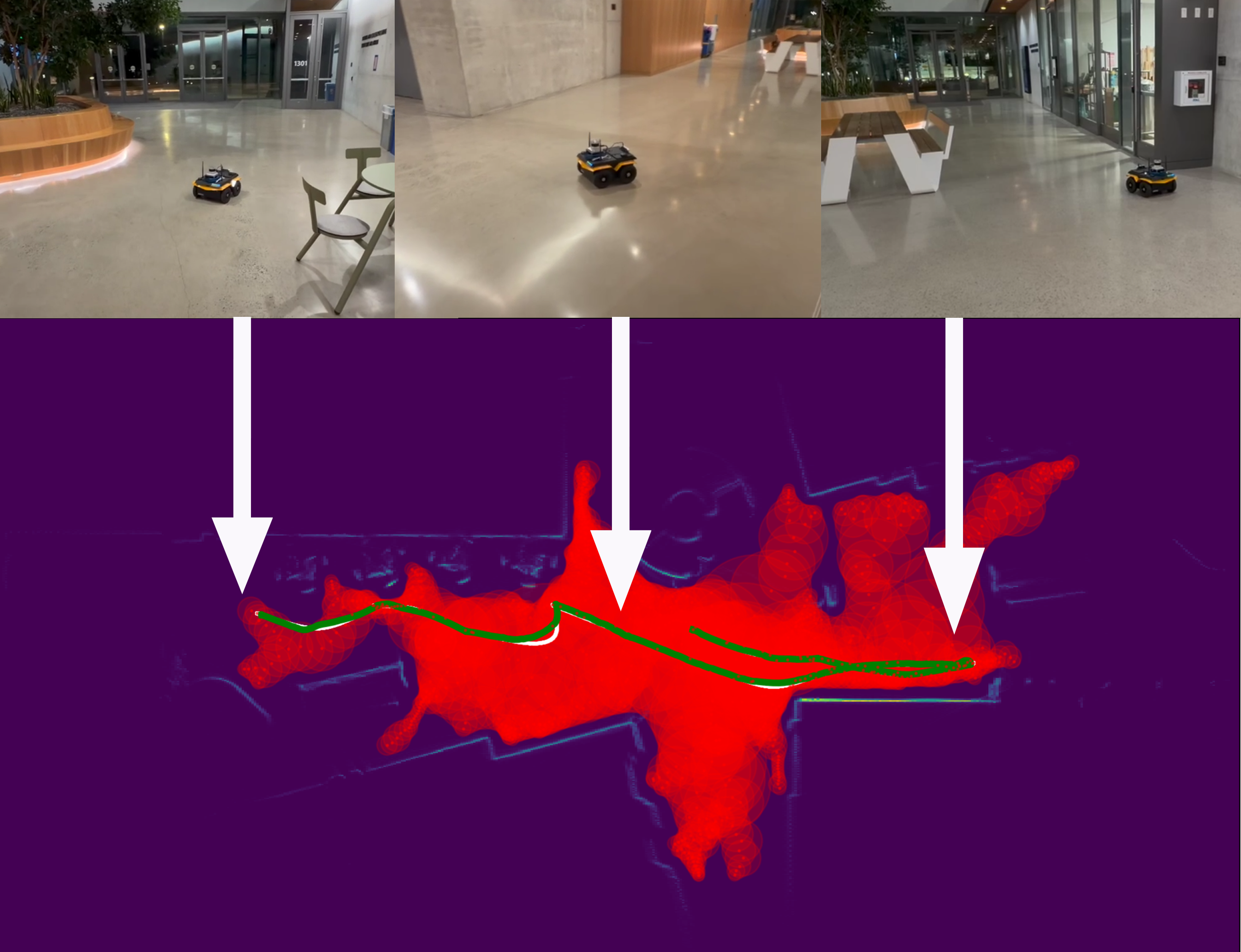}
    \caption{Autonomous navigation in a large unknown environment. Top: the robot at three times during the experiment. Bottom: the online-constructed bubble cover (red), reference B\'ezier trajectory (white), and executed trajectory (green). White arrows indicate the robot positions shown in the top panels.}
    \label{fig:env_views}
\end{figure}
%
% \marginJC{What do the large white arrows denote in Fig 5?}
%

\begin{table}[t]
\centering
\caption{Performance of \textsc{BubbleCBF} in the large-scale unknown environment. CR, SR, DR, CA, and OD from obstacles.}
\label{tab:largeworldres}
\setlength{\tabcolsep}{4pt}
\renewcommand{\arraystretch}{1.15}
\begin{tabular}{ccccc}
\hline
\textbf{CR (\%) $\downarrow$} & \textbf{SR (\%) $\uparrow$} & \textbf{DR (\%) $\downarrow$} & \textbf{CA (m$^2$) $\uparrow$} & \textbf{OD (m) $\uparrow$} \\
\hline
0.0 & 100.0 & 0.0 & $126.99 \pm 43.13$ & $1.247 \pm 0.31$ \\
\hline
\end{tabular}
\end{table}

\par 
% An extension to dynamic enviornments can be formulated as updating the bubble radii by querying the SDF online and rebuilding the CBF optimization, this way every discrete timestep would account for local safety gaurantees within the time step and 

%% file: tex2/conclusion.tex
\section{Conclusions}
We presented \textsc{BubbleCBF}, a safe control synthesis method that converts spherical primitives (bubbles) obtained from signed distance measurements into a feasible higher-order CBF for second-order systems.
%an arbitrary SDF representation into collision-free spherical primitives and uses their quadratic geometry to define a feasible higher-order CBF for second-order systems. 
We proved that a quadratic program with our second-order CBF constraint yields a continuous controller, renders the bubble set forward invariant, and remains feasible under sufficient bounded-input conditions. Simulation and hardware experiments demonstrate that \textsc{BubbleCBF} enables safe autonomous navigation in unknown environments even as nominal trajectory tracking degrades. Future work will consider dynamic environments, online adaptation of the bubble cover as the SDF evolves, and extensions to higher-dimensional configuration spaces.

% \begin{comment}
% \section{To Do's}
% Two things need to be established and I am working on them:

% Pending theoretical results:
% \begin{itemize}
%   \item $\Cc^s_i \cap \Cc_{i+m}^s$ is not empty.
%   \item CLF-CBF compatibility.
%   \item Proof that the safety condition is satisfied for a unicycle with zero velocity.
% \end{itemize}

% Practicals:
% \begin{itemize}
%     \item Check the literature for similar projects, for example: 
%     \begin{enumerate}
%         \item Ames' reduced-order models and CBFs.
%     \end{enumerate}
%     \item Finalize code for unicycle and implement it.
% \end{itemize}
% \end{comment}

%% file: appendix.tex
\section{Appendix}
% \subsection{\bf Probabilistic Bubbles for the Choice of Barrier Functions}

% In practice, robots estimate the SDF from onboard sensors. These estimates closely approximate the true SDF but are inherently probabilistic. We choose our barrier function \(h\) given a probabilistic confidence interval to provide probabilistic guarantees of safety. Any other formulation of \(h\) can be used. 

% \BL{Put this in the experiments. Throughout paper use $h$, this is our choice of $h$.}
% \begin{prop}[Probabilistic safe bubble from estimated SDF]\label{Prop:probbubble}
% Let $S(x)$ be the random signed distance at $x\in\mathbb{R}^n$ with mean $\mu(x)$ and variance $\sigma^2(x)$.
% A subtractive slack of the value
% \[
% \tau(x)=\sqrt{\frac{p}{1-p}}\;\sigma(x)
% \]
% is the minimal amount to reduce to ensure probabilistic safety at confidence level $p\in(0,1)$ for bubbles in (\ref{eq:positional_safe_ball}).
% \
% \end{prop}
% \begin{proof}
% Consider the probabilistic safety bound
% \[\Pr\!\big(S(x)\ge \mu(x)-\tau(x)\big) \ge p \]

% By Cantelli's lower tail bound

% \[\Pr\!\big(S(x)-\mu(x)\leq-\tau(x)\big) \leq \frac{\sigma^2}{\sigma^2 + \tau(x)^2} \]

% By taking the compliment and setting \(p = \frac{\tau(x)^2}{\sigma^2 + \tau(x)^2}\), we minimize \(\tau(x)\) such that \(\tau(x)^* = \sqrt{\frac{p}{1-p}}\sigma(x)\)
% \end{proof}

\subsection{Proof of Theorem~\ref{thm:safe}}\label{app:proof_safe}
 \begin{proof}
  Let $\bfx \in \partial \calS$, i.e., $s(\bfx) = 0$, and define  $\Ic(\bfx) \coloneqq \{i \;|\; s_i(\bfx) = s(\bfx)\}$. For every $i \in \mathcal{I}(\bfx)$, we have $s_i(\bfx)=0$. Define the active barriers \(\mathcal Y_i(\bfx) \coloneqq \{\zeta\in\{b_i,h_i,c\}\mid \zeta(\bfx)=s_i(\bfx)=0\}.\) We show that the controller in~\eqref{eq:thm_feasible} satisfies
$\dot{\zeta}(\bfx,\bfu)>0$ for every $i\in\mathcal I(\bfx)$ and every
$\zeta\in\mathcal Y_i(\bfx)$. 
  %By~\cite[Thm. 4.12]{MA-NA-JC:24-auto}, this is sufficient to prove the theorem. 
Fix $i\in\Ic(\bfx)$ and $\zeta\in\mathcal Y_i(\bfx)$. Then
$\zeta=b_i$, $\zeta=h_i$, or $\zeta=c$, with multiple
barriers possibly being active at the same time.
  % We will construct a choice of $\bfu$ that gives $\dot \zeta(\bfx,\bfu)>0$ for all $i \in  \Ic(\bfx)$ and all possibilities of $\zeta$.

Case (a): $\zeta=b_i$. Since $i\in\Ic(\bfx)$ and
$s_i(\bfx)=\min\{b_i(\bfq),h_i(\bfx),c(\bfv)\}=0$, we have
$h_i(\bfx)\geq0$. Moreover, $\zeta=b_i\in\mathcal Y_i(\bfx)$ implies
$b_i(\bfq)=0$. Combining $h_i(\bfx) \geq 0$ with $b_i(\bfq) = 0$ yields $\dot b_i(\bfq) \geq \epsilon_i > 0$, regardless of $\bfu$.
% Therefore,
% \begin{equation}
% h_i(\bfx)=\dot b_i(\bfx)+\gamma b_i(\bfq)-\epsilon_i=\dot b_i(\bfx)-\epsilon_i\geq0,
% \end{equation}
% and hence
% \begin{equation}
% \dot\zeta(\bfx,\bfu)=\dot b_i(\bfx)=-a2(\bfq-\bfc_i)^\top\bfv\geq\epsilon_i>0,
% \end{equation}
% regardless of $\bfu$.

Case (b): $\zeta = h_i$. So, $\dot \zeta(\bfx,\bfu) = \dot h_i(\bfx,\bfu)$. 
% $= (-2\bfv + -\gamma (\bfq - c_{i}))^\top \bfv -2(\bfq - c_{i})^\top(\mathbf{f}(\bfx)+G(\bfx)u)$. 
By definition of $\calS$, $\bfq \in \bigcap_{i \in \Ic(\bfx)} \calB_i$. Hence, $\Ic(\bfx) \in S_\cap$, and Assumption~\ref{as:interior} ensures $\bfy_{\Ic(\bfx)}$ exists. For brevity, we write $\bfy_{\Ic(\bfx)}$ as $\bfy_\Ic$. 
% We are now ready to define our choice of $\bfu$ as follows.
% \begin{align}
% \bfu \coloneqq a G^{+}(\bfx)(\bfv + \frac{1}{2}\gamma (\bfq - y_{\Ic}))
% \end{align} 
%   with $a < 0$. 
Substituting $\bfu$ in $\dot h_{i}(\bfx)$ gives
\begin{align}
    \dot h_i(\bfx) &= (-2\bfv -2\gamma (\bfq - \bfc_{i}))^\top \bfv -2(\bfq - \bfc_{i})^\top \mathbf{f}(\bfx) \nonumber \\
    % &  \nonumber \\
    & +2k(\bfq - \bfc_{i})^\top (\bfv + \frac{1}{2}\gamma \bfq - \frac{1}{2}\gamma \bfy_{\Ic}) \label{eq:bdotCase2}
\end{align}
We will show that the last term has a positive lower bound, and can thus steer $\dot{h}_i$ positive by setting a sufficiently large positive $k$. From $h_i(\bfx) = 0$, we have:
\begin{equation}\label{eq:sign}
  \begin{aligned}
    % h_i(\bfx) &= -2(\bfq - \bfc_{i})^\top \bfv + \gamma  \left (r_{i}^2-\|\bfq - \bfc_{i}\|^2 \right ) - \epsilon_{i} = 0\notag\\
    % H_i(\bfx) &=-2(\bfq - \bfc_{i})^\top \bfv + \gamma  B_{i}(\bfx) - \epsilon_{i} = 0\notag\\
    % \implies & -2(\bfq - \bfc_{i})^\top \bfv + \gamma  \left (r_{i}^2-\|\bfq - c_{i}\|^2 \right ) - \epsilon_{i} = 0\notag\\
    % \implies & -2(\bfq - \bfc_{i})^\top (\bfv + \frac{1}{2}\gamma \bfq) =\notag\\
    % &\hskip 1cm -\gamma r_{i}^2 - \gamma (\bfq - \bfc_{i})^\top \bfc_{i} + \epsilon_{i} \notag\\
    % \implies & 
    -2&(\bfq - \bfc_{i})^\top (\bfv + \frac{1}{2}\gamma \bfq - \frac{1}{2}\gamma  \bfy_{\Ic}) \\
    &= 
     -\gamma r_{i}^2 - \gamma (\bfq - \bfc_{i})^\top (\bfc_{i} - \bfy_{\Ic}) + \epsilon_i
  \end{aligned}
\end{equation}
  Applying Young's inequality to the second-to-last term yields:
  \begin{align*}
    % -(\bfq - \bfc_{i})^\top (\bfc_{i} - \bfy_{\Ic}) \\
    -(\bfq - \bfc_{i})^\top (\bfc_{i} - \bfy_{\Ic}) &\leq \frac{1}{2}\|\bfq - \bfc_{i}\|^2 + \frac{1}{2} \|\bfc_{i} - \bfy_{\Ic}\|^2\\
    &\leq \frac{1}{2}r_{i}^2 + \frac{1}{2} \delta_{i}^2,
  \end{align*}
  where the last line uses definitions of $\calB$ and $\delta_i$. So by~\eqref{eq:sign},
  \begin{equation}
-2(\bfq - \bfc_{i})^\top (\bfv + \frac{1}{2}\gamma \bfq - \frac{1}{2}\gamma  \bfy_{\Ic}) \leq -\frac{1}{2}\gamma (r_{i}^2 - \delta_{i}^2) + \epsilon_{i} < 0,
  \end{equation}
 % \begin{align*}
 %  &-2(\bfq - \bfc_{i})^\top (\bfv + \frac{1}{2}\gamma \bfq - \frac{1}{2}\gamma  \bfy_{\Ic}) \\
 %    % & \hskip 1cm \leq -\gamma r_{i}^2 + \frac{1}{2}\gamma  r_{i}^2 + \frac{1}{2} \gamma \delta_{i}^2 + \epsilon_{i}\\
 %    & \hskip 1cm \leq -\frac{1}{2}\gamma (r_{i}^2 - \delta_{i}^2) + \epsilon_{i} < 0,
 % \end{align*}
where the last inequality is the premise of the theorem. This gives a positive lower bound on the last term of~\eqref{eq:bdotCase2}, allowing a sufficiently large $k$ to steer $\dot{h}_i$ positive. 
 To see how large $k$ should be, we bound the first two terms of $\dot h_i$ as:
\begin{align*}
  &(-2\bfv  -2\gamma (\bfq - \bfc_{i}))^\top \bfv -2(\bfq - \bfc_{i})^\top \mathbf{f}(\bfx)\\
  \geq & -2\|\bfv\|^2 - 2\gamma \|\bfq - \bfc_{i}\|\|\bfv\| - 2\|\bfq - \bfc_{i}\|\bigl(\|\mathbf{f}_q(\bfq)\|+\|\mathbf{f}_v(\bfv)\|\bigr) \\
  \geq & -2v_{\max}^2 - (2\gamma + 2 k_v ) r_{i} v_{\max} - 2r_{i} k_1,
\end{align*}
where the last inequality is implied by the facts that $\|\bfv\| \leq v_{\max}$ (from $\bfv \in \calC$) and $\|\mathbf{f}(\bfx)\| \leq \|\mathbf{f}_q(\bfq)\| + \|\mathbf{f}_v(\bfv)\| \leq k_1 + k_v\|\bfv\|$. Thus, it is sufficient for $\dot h_i(\bfx) > 0$ that
\begin{equation}
\label{eq:aCond1}
k  > a_1 :=
\max_i
\frac{
2v_{\max}^2
+ 2(\gamma+k_v)r_i v_{\max}
+ 2r_i k_1
}{
\frac{1}{2}\gamma\left(r_i^2-\delta_i^2\right)-\epsilon_i
}
>0.
\end{equation}
Case (c): $\zeta = c$. Thus, $c(\bfv) = v_{\max}^2 - \bfv^\top \bfv = 0$ and $\dot c(\bfv) = -2\bfv^\top \dot {\bfv} = -2\bfv^\top (\mathbf{f}(\bfx) + G(\bfx)\bfu)$. Using $\bfv^\top \bfv = v_{\max}^2$ from $c(\bfv) = 0$, substituting the controller~\eqref{eq:thm_feasible} in $\dot c(\bfv)$ yields
\begin{align*}
  \dot c(\bfv) = -2\bfv^\top \mathbf{f}_q(\bfq) -2\bfv^\top \mathbf{f}_v(\bfv) + 2k \left(v_{\max}^2 + \tfrac{1}{2}\gamma \bfv^\top (\bfq - \bfy_\Ic) \right).
\end{align*}
Using $\|\bfq - \bfy_\Ic\|\leq 2r_i$ for any $i \in \Ic$, we have: 
% where $\Ic$ in $\bfy_\Ic$ is determined from Possibility (b) as long as there is some $i$ with $H_i(\bfx) = 0$. If not, then we choose $\bfu = aG^+(\bfx)\bfv$, which does not affect Possibility (b) since the theorem is proved for it by default. 
% In both cases, for a negative $a$ and keeping in mind that 
\begin{align*}
  \dot c(\bfv) \geq -2v_{\max} k_1 - 2k_v v_{\max}^2 + 2k (v_{\max}^2 - \gamma v_{\max} r_i).
\end{align*}
By the premise $\gamma < \frac{v_{\max}}{r_{\text{max}}}$, it is sufficient for $\dot c(\bfv) > 0$ that
\begin{equation}
k > a_2 :=
\frac{
v_{\max}k_1+k_vv_{\max}^2
}{
v_{\max}^2-\gamma v_{\max}r_{\max}
}
>0.
\label{eq:aCond2}
\end{equation}
Thus, choosing $\bfu$ as in~\eqref{eq:thm_feasible} with $k>\max\{a_1,a_2\}$ satisfies both \eqref{eq:aCond1} and \eqref{eq:aCond2}.
\end{proof}

% \subsection{Proof of Theorem~\ref{thm:inputConstraints}}\label{app:proof_bound}
% \begin{proof}
%     To prove this result we need only show that $\bfu$, defined in the proof of Theorem~\ref{thm:safe}, is in $\Uc$. Recall that $\bfu = a G^+(\bfx)(\bfv + \frac{1}{2}\gamma (\bfq - \bfy_I))$. For the condition $\bfu \in \Uc$ it is sufficient that
%     \begin{align*}
%         |a|k_G(k_vv_{\max} + \frac{1}{2}\gamma \|\bfq - \bfy_{\Ic}\|) \leq u_{\text{max}}       
%     \end{align*}
% But $\|\bfq - \bfy_{\Ic}\| \leq 2r_i$ for any $i \in I$, $|a| > \max\{|a_1|,|a_2|\}$, where $a_1$ and $a_2$ are as per~\eqref{eq:aCond1},~\eqref{eq:aCond2}. 
% \MA{are we setting \(v_{\max}\) to 0 to make it small enough for the input constraint? \(v_{\max}\) has the condition of \(v_{\max}<\gamma r_{max}\) so I want to check if that is the case or not. }
%     Note also that as our choice of $v_{\max}$ decreases,
%     \begin{align*}
%         \max\{|a_1|,|a_2|\} \to \frac{2k_1\gamma r_i}{\frac{1}{2}\gamma (r_i^2 - \delta_i^2) - \epsilon_i^2}
%     \end{align*}
%     \MA{should this be $\epsilon^2$ or $\epsilon$? before we defined $a_1$ to have $\frac{1}{2}\gamma(r_i^2 - \delta_i^2)-\epsilon_i$ in the denominator}
%     with the $i$ relevant to the position $\bfx$ at which $\bfu$ is calculated. So if
%     \begin{align*}
%         \max\{|a_1|,|a_2|\}k_G\gamma r_i  =  \frac{2k_1k_G\gamma r_i^2}{\frac{1}{2}\gamma (r_i^2 - \delta_i^2) - \epsilon_i^2}< u_{\text{max}} 
%  \end{align*}
%  then there is a low enough choice of $v_{\max}$ and of $a$ that is sufficient for $\bfu$ to be in $\Uc$.
% \end{proof}
\subsection{Proof of Theorem~\ref{thm:inputConstraints}}
\label{app:proof_bound}
\begin{proof}
It suffices to show that the controller in~\eqref{eq:thm_feasible} belongs to $\Uc$. Since
$\|G^\dagger(\bfx)\|\leq k_G$, $\|\bfv\|\leq v_{\max}$, and
$\|\bfq-\bfy_{\Ic}\|\leq 2r_{\max}$,
\begin{equation}
\|\bfu\|
\leq
k k_G(v_{\max}+\gamma r_{\max}).
\end{equation}
From~\eqref{eq:aCond1}--\eqref{eq:aCond2}, it suffices to choose
\begin{equation}
\max\{a_1,a_2\}<k
\leq
\frac{u_{\max}}
{k_G(v_{\max}+\gamma r_{\max})}.
\end{equation}
Such a $k>0$ exists provided that
\begin{equation}
u_{\max}
>
k_G(v_{\max}+\gamma r_{\max})\max\{a_1,a_2\}.
\end{equation}
Hence, $\| \bfu \| < u_{\max}$, and the QP in \eqref{eq:qp_cbf} remains feasible.
\end{proof}

% \subsection{Proof of Corollary~\ref{thm:diffdrive_bounds}}
% \label{app:diffdrive_bounds}
% \begin{proof}
% For the differential-drive dynamics~\eqref{eq:diffdrive},
% $\bff(\bfx)=0$, and hence $k_1=k_v=0$. Fix $c>1$ and
% $\rho_i\in(0,1)$, and choose
% \[
% v_{\max}=c\gamma r_{\max},
% \qquad
% \epsilon_i
% =
% \rho_i\frac{\gamma}{2}(r_i^2-\delta_i^2).
% \]
% Then the conditions of Theorem~\ref{thm:safe} are satisfied. Moreover,
% the gain bound\BL{add eqref to which bound you are talking about} in the proof of Theorem~\ref{thm:inputConstraints}
% reduces to\BL{I can see the lower bound on $u_{\max}$ should be $k_G (1 + c) \gamma r_{\max} \max{ \{a_1, a_2\} }$. Where did $k_G$ go in $L$?}
% \[
% \max\left\{
% a_1,a_2
% \right\} \gamma L,
% \]
% where
% \[
% L:=
% \max_i
% \frac{
% 2(2c^2r_{\max}^2+cr_ir_{\max})
% }{
% (1-\rho_i)(r_i^2-\delta_i^2)
% }
% \]
% is independent of $\gamma$. Therefore, the sufficient input condition is
% \[
% u_{\max}
% >
% k_GL(c+1)r_{\max}\gamma^2.
% \]
% Since the right-hand side converges to zero as $\gamma\to0$, for every
% $u_{\max}>0$ we may choose
% \[
% \gamma
% <
% \sqrt{
% \frac{u_{\max}}
% {k_GL(c+1)r_{\max}}
% }.
% \]
% Hence, the constrained quadratic program can be made feasible for any
% positive input limit.
% \end{proof}
\subsection{Proof of Corollary~\ref{thm:diffdrive_bounds}}
\label{app:diffdrive_bounds}
\begin{proof}
For the differential-drive dynamics~\eqref{eq:diffdrive2},
$\bff(\bfx)=0$, so $k_1=k_v=0$ and, from~\eqref{eq:aCond2}, $a_2=0$. Fix $\eta>1$ and $\rho_i\in(0,1)$, and choose
\begin{equation}
v_{\max}=\eta\gamma r_{\max},
\qquad
\epsilon_i
=
\rho_i\frac{\gamma}{2}(r_i^2-\delta_i^2).
\end{equation}
These choices satisfy the conditions of Theorem~\ref{thm:safe}. Substituting
them into~\eqref{eq:aCond1} gives
\begin{equation}
a_1
=
\gamma
\max_i
\frac{
4(\eta^2r_{\max}^2+\eta r_ir_{\max})
}{
(1-\rho_i)(r_i^2-\delta_i^2)
}
=:\gamma L,
\end{equation}
where $L$ is independent of $\gamma$. Hence,
$\max\{a_1,a_2\}=\gamma L$.

The sufficient condition of Theorem~\ref{thm:inputConstraints} becomes
\begin{equation}
\label{eq:29}
u_{\max}
>
k_GL(\eta+1)r_{\max}\gamma^2.
\end{equation}
On states with $\|\bfv\|\ge v_{\min}$,
$k_G=\max\{1,(\beta\gamma r_{\max})^{-1}\}$. Thus,~\eqref{eq:29} holds for any $u_{\max}>0$ by choosing
\begin{equation}
0<\gamma<
\min\left\{
\sqrt{\frac{u_{\max}}{L(\eta+1)r_{\max}}},
\frac{\beta u_{\max}}{L(\eta+1)}
\right\}.
\end{equation}
Therefore, the QP is feasible for any positive input limit
by choosing $\gamma$ sufficiently small.
\end{proof}

% \NA{This is written confusingly. What are we showing? That the controller in \eqref{eq:thm_feasible} ensures $\dot \zeta(\bfx, \bfu) > 0$ for any function of $b_i, h_i,$ or $c$ that equals $0 = s_i(\bfx)$ at $\bfx$? Is $\zeta$ only $b_i$, $h_i$, or $c$ or it can be a more complex function of these?}